\documentclass[11pt]{article}
\usepackage{amssymb}
\usepackage{amsmath}
\usepackage{amsthm}
\usepackage{url} 
\usepackage{natbib}
\usepackage[normalem]{ulem}
\usepackage{hyperref}

\usepackage{cleveref}

\usepackage{float}
\usepackage{lipsum}
\usepackage[ruled]{algorithm2e}
\usepackage{geometry}
\usepackage[title]{appendix}

\usepackage{booktabs}
\usepackage{soul}
\usepackage[authormarkup=none,todonotes={textsize=scriptsize}
]{changes}
\usepackage{subcaption} 
\usepackage{multirow}
\usepackage{xr}
\usepackage{tablefootnote}
\usepackage{pifont}
\usepackage{algorithmic}
\newtheorem{theorem}{Theorem}[section]

\newtheorem{lemma}[theorem]{Lemma}
\newtheorem{corollary}[theorem]{Corollary}
\newtheorem{definition}[theorem]{Definition}
\newtheorem{assumption}[theorem]{Assumption}

\input{macros.tex}

\newcommand\tnorm[1]{\left\vert\xspace\left\vert\xspace\left\vert\mskip2mu
#1\mskip2mu \right\vert\xspace\right\vert\xspace\right\vert}

\newcommand{\keywords}[1]
{
  \small	
  \textbf{\textit{Keywords---}} #1
}

\setdeletedmarkup{\IfIsColored{\color{authorcolor!20}}((#1)}

\definechangesauthor[color=blue]{KL}
\definechangesauthor[color=purple]{SH}

\author{Haitham Kanj, and Kiryung Lee\thanks{The authors are with the Department of Electrical and Computer Engineering at the Ohio State University (corresponding author: kanj.7@osu.edu). This work was supported in part by NSF CAREER Award CCF-1943201 and is currently submitted for review in the Journal of Information and Inference. } \\
Department of Electrical and Computer Engineering\\
The Ohio State University, Columbus, OH, USA}

\begin{document}

%\subtitle{Subject Section}

%\revised{Date}{0}{Year}
%\accepted{Date}{0}{Year}

%\editor{Associate Editor: Name}

\title{Locally Near Optimal Piecewise Linear Regression in High Dimensions via Difference of Max-Affine Functions}
\maketitle
\keywords{Piecewise Linear; Tropical Rational Function; Nonlinear Regression. }
%%%%%%%%%%%%%%%%%%%% ABSTRACT
\abstract{
This paper presents a parametric solution to piecewise linear regression through the Adaptive Block Gradient Descent (ABGD) algorithm. 
The heart of the method is the parametrization of piecewise linear functions as the difference of max-affine (DoMA) functions. 
A non-asymptotic local convergence analysis for ABGD is provided under sub-Gaussian covariate and noise distributions. To initialize ABGD, we adapt a prior algorithm originally developed for the simpler setting of max-affine functions.
When suitably initialized, ABGD converges linearly to an $\epsilon$-accurate estimate given $\tilde{\mathcal{O}}(d\max(\sigma_z/\epsilon,1)^2)$ observations where $\sigma_z^2$ denotes the noise variance. This implies exact recovery given $\tilde{\mathcal{O}}(d)$ samples in the noiseless case. Also, such a rate is shown to be minimax optimal up to logarithmic factors.  Synthetic numerical results corroborate the theoretical guarantees for ABGD. We also observe competitive performance compared to the state-of-the-art methods on real-world datasets.
}

%%%%%%%%%%%%%%%%%%%%%%%%%%%%%%%%%%%%%%%%%%%%
\section{Introduction}\label{Sec:Intro}
Nonlinear regression has been widely used in various applications from machine learning and statistics \citep{gallant1975nonlinear,bates1988nonlinear}. 
Established solutions to nonlinear regression can be categorized as non-parametric, semi-parametric, or fully parametric \citep{hardle1993comparing,mahmoud2019parametric}. Non-parametric regression assumes no prior knowledge of the nonlinear function that maps the covariates to the target variable. However, in the high-dimensional setting, this methodology suffers from significant estimation variance due to the scarcity of the data. On the other hand, fully parametric regression utilizes prior knowledge of the nonlinear function up to a few unknown model parameters, thus enabling low-cost and sample-efficient algorithms. However, a model mismatch can lead to a large estimation bias. Alternatively, semi-parametric regression introduces a flexible model for the covariate-target relation given as the composition of linear dimensionality reduction followed by a nonlinear function. 
Semi-parametric regression serves as a middle ground between the two previous methodologies, by balancing the strengths and weaknesses. 
%In this case, the link function is defined as a mapping from a few linear combinations of the covariates to the target variable. 
%Such models suffer less estimation bias compared to fully parametric models and admit better scaling in high dimensions compared to non-parametric models. 
Ultimately, the choice of methodology should be guided by a researcher's judgment according to their specific application. 

Regardless of the choice of methodology, limiting the class of nonlinear functions with respect to specific properties (e.g. continuity or smoothness) is necessary for the feasibility of regression. 
%Regardless of the choice of methodology, we must limit the regression problem to a class of functions determined by a set of conditions (usually on continuity or differential orders). 
In this paper, we consider the class of continuous piecewise-linear (PL) functions.
A useful property is that any function in this
class can be written as the difference of two max-affine (convex piecewise linear) functions \citep[Proposition 1]{siahkamari2020piecewise}. This parametrization is at the heart of the proposed method.
PL functions enjoy interpretability by inherently partitioning the covariate domain and identifying the weight of each covariate in explaining the target variable. 
The parametric approach proposed in this paper limits the number of affine functions in the PL fit, thereby improving interpretability.
Furthermore, it has been shown that PL functions can uniformly approximate any continuous nonlinear function with bounded gradients \citep[Proposition 2.2]{bavcak2011difference}.

\renewcommand{\arraystretch}{1}
\begin{table*}[t]
\centering
\caption{Comparison of nonlinear regression methodologies over different function classes with associated non-asymptotic sample size guarantees and runtimes.}
\label{Table:reg_ref}
\resizebox{\textwidth}{!}{
\begin{tabular}{l l c c c}
\toprule
\textbf{Method} & \textbf{Modeling Assumption} & \textbf{Sample Size} & \textbf{Runtime} & \textbf{Reference} \\ 
\midrule

Difference of Convex 
& Continuous with bounded gradients 
& $\mathcal{O}(e^d)$ 
& $\mathcal{O}(n^5 d^2)$ 
& \cite{siahkamari2020piecewise} \\ 

\addlinespace

Adaptive Regression Splines  
& $M$ tensor-product basis functions  
& $\infty$ 
& $\mathcal{O}(n d M^2)$ 
& \cite{eckle2019comparison} \\ 

\addlinespace

Nearest Neighbors Regression  
& $k$-nearest piecewise constant 
& $\mathcal{O}(e^d)$  
& $\mathcal{O}(n d \log k)$ 
& \cite{kpotufe2011} \\ 

\addlinespace

Multi-Index Learning    
& Depends on a covariate subspace 
& $\mathrm{poly}(d)$   
& $\mathrm{poly}(d)$ 
& \cite{bietti2023learning} \\ 

\addlinespace

ReLU Deep Neural Network    
& $L$-layer piecewise linear model 
& $\infty$   
& $\mathcal{O}(n d L)$ 
& \cite{li2017convergence} \\ 

\addlinespace

Difference of Max-Affine 
& $k$-segment piecewise linear 
& $\mathbf{\mathcal{O}(d)}$  
& $\mathcal{O}(n d^2 e^k)$
& \textbf{This paper} \\ 

\bottomrule
\end{tabular}}

\end{table*}

\subsection{Connection to Tropical Algebra} PL functions can be viewed as objects from max-plus algebra, a special case of tropical algebra \citep{maragos2021tropical,zhang2018tropical}. 
Specifically, max-affine and PL functions are referred to as tropical polynomials and tropical rational functions, respectively. 
Results from tropical algebra show that any deep neural network (DNN) with PL activations (e.g. rectified linear unit, or absolute value) can be rewritten as a difference of two max-affine functions. 
For example, consider a PL function written as a difference of max-affine functions with $k_1$ and $k_2$ affine segments, respectively.
Then it has been shown that this PL function can be represented as an $L$-layer ReLU DNN where $L \leq \lceil\log_2\max(k_1,k_2)\rceil +2$ \citep[Theorem 5.4]{zhang2012sparse}. Existing theoretical results from tropical algebra provide elegant geometric interpretations for PL regression, which will be elaborated in Section \ref{Sec:Tropical}. 
\subsection{Relevant Work} We highlight relevant theoretical results on nonlinear regression. 
\citet{siahkamari2020piecewise} proposes a non-parametric solution over the class of difference of convex functions.
This method is agnostic to any knowledge about the nonlinear mapping within the class of difference of convex functions with bounded gradients. 
However, their method inherits the sample and computation requirements of non-parametric methods. Specifically, the sample complexity $n$ scales exponentially in the ambient dimension $d$. Furthermore, their worst-case computational cost of $\mathcal{O}(n^5d^2)$ becomes infeasible for large datasets.
More recently, \citet{bietti2023learning} proposed a semi-parametric solution to nonlinear regression. 
They assumed that the covariate-target relation is the composition of linear dimensionality reduction followed by an unknown nonlinear function. 
The authors study gradient flow under the following setup: (i) the covariates follow the multivariate Gaussian distribution, (ii) the target is noise-free, (iii) the number of samples is infinitely large (population level), and (iv) the step size is infinitesimal (continuous time). 
Under these conditions, \citet[Theorem 3.25]{bietti2023learning} provided global convergence guarantees of gradient flow. 
However, regarding the sample complexity, they only provided a conjecture that it would scale as $\mathrm{poly}(d)$. 

\subsection{Contributions}
This paper presents theoretical guarantees for the \textit{adaptive block gradient descent} (ABGD) algorithm for parametric PL regression. 
Since the adaptive step size formula is explicitly given for ABGD, no hyperparameter tuning is required.
Furthermore, ABGD enjoys the computational cost of gradient-based methods as $\mathcal{O}(nd)$ per iteration. 
The heart of ABGD is the parametrization of PL functions as the difference of two max-affine functions (DoMA), i.e. $\bm x \mapsto \max_{j}\langle\bm \beta_j, [\bm x;1]\rangle - \max_{l}\langle\bm \alpha_l, [\bm x;1]\rangle$. 
We provide a non-asymptotic local convergence analysis for ABGD under sub-Gaussian noise when the covariate distribution satisfies sub-Gaussianity and anti-concentration properties. 
We consider the scenario where no pair of $( \bm \beta_j, \bm \alpha_l)$ for all $j\in \{1,\ldots,k_1\}$ and $l\in \{1,\ldots,k_2\}$ achieves the two maxima in the DoMA function simultaneously with an overwhelming probability. 
In this case, a suitably initialized ABGD converges linearly to an $\epsilon$-accurate estimate given $\mathcal{O}(d\max(\epsilon^{-2}\sigma_z^2,1))$ observations where $\sigma_z^2$ denotes the noise variance, thereby achieving optimal linear dependence on $d$ up to logarithmic factors. A comparison of existing methods for nonlinear regression is provided in Table \ref{Table:reg_ref}. Our method is the first to balance to achieve a linear dependence of the sample size $n$ on the ambient dimension at the cost of exponential runtime scaling in the model order $k$. This runtime becomes quite reasonable when $k \ll \max(n,d)$, which we assume.\footnote{The runtime of our method reported in Table \ref{Table:reg_ref} as $\mathcal{O}(nd^2e^k)$ assumes the randomized implementation of the PCA step for the initialization algorithm, which allows the quadratic dependence on the ambient dimension $d$. Also, the exponential dependence on $k$ is due to a brute force net search, which could be replaced by a randomized search with $\mathcal{O}(k)$ runtime in practice. } 
The size of the basin for linear convergence depends on the geometric properties of the ground-truth parameters. 
%Since the corresponding parameter estimation casts as a non-convex optimization problem, it is important for ABGD to start from a suitable initialization within the basin of attraction of the ground truth. 
In Section~\ref{Sec:Numerical}, we corroborate these theoretical guarantees and compare ABGD to relevant methods on real-world datasets. 
Compared to the semi-parametric and non-parametric methods described earlier, the parametric PL regression by ABGD provides better sample complexity scaling and lower computational costs. 
Furthermore, we investigate methods to obtain initial estimates of the model parameters. 
We also present that the spectral method developed for max-affine can be extended to DoMA regression. 
In particular, we characterize the relation between subspaces spanned respectively by the pairwise differences and all parameter vectors. 

\subsection{Notation} We use lightface characters to denote scalars, lowercase boldface characters to denote column vectors, and uppercase boldface to denote matrices. 
We also adopt the symbols for the max and min operators in lattice theory, i.e. $a \vee b = \max(a,b)$ and $a \wedge b = \min(a,b)$ for $a,b \in \mathbb{R}$. We use multiple matrix norms.
The Frobenius norm, the spectral norm, and the largest magnitude of entries will be respectively denoted by $\|\cdot\|_\mathrm{F}$, $\|\cdot\|$, and $\|\cdot\|_\infty$. We also reuse $\|\cdot\|$ for the Euclidean norm. 
We use a shorthand notation $[d]$ for the set $\{1,2,\dots,d\}$.
For a column vector $\bm x \in \mathbb{R}^d$, its sub-vector with the entries indexed by $\mathcal{S} \subset [d]$ is denoted by $[\bm x]_\mathcal{S}$. 
Similarly, for a matrix $\bm X \in \mathbb R^{d\times d}$, its sub-matrix with the entries indexed by $\mathcal{S}_1 \times \mathcal{S}_2 \subset [d]\times [d]$ is denoted by $[\bm X]_{\mathcal{S}_1,\mathcal{S}_2}$. Finally, we denote by $C, C_1, C_2, \dots$ universally absolute constants, not necessarily the same at each occurrence.
\begin{figure*}[ht]
        \centering
        \includegraphics[width=0.9\textwidth]{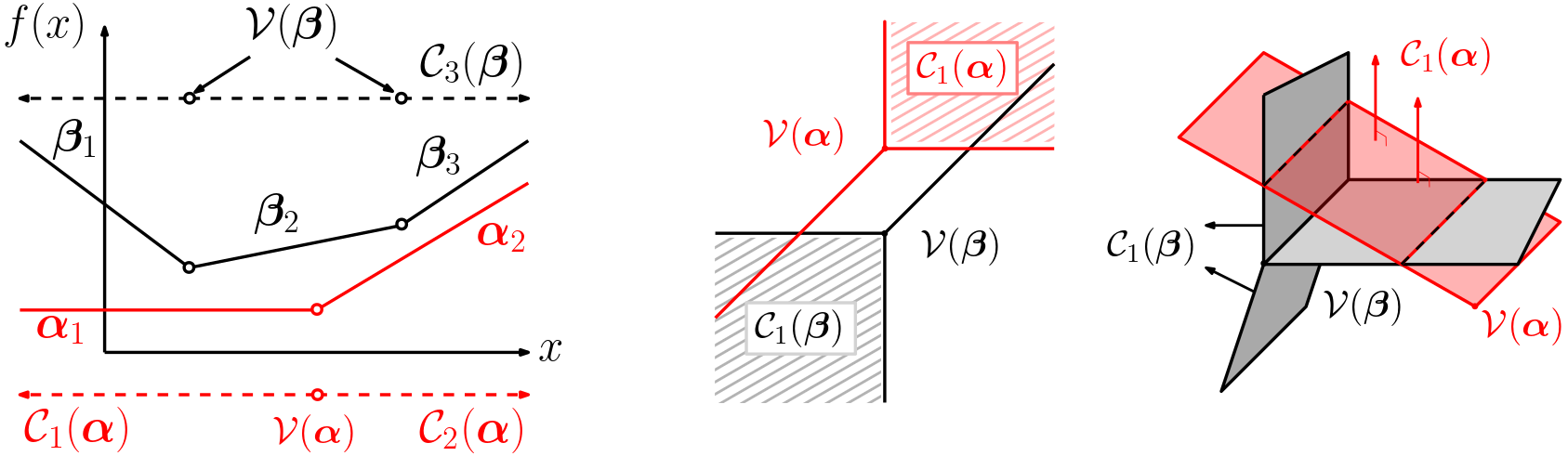} 
\caption{Examples of DoMA functions with $(\bm \beta, \bm \alpha)$ partitioning $\mathbb R$ (left), $\mathbb R^2  $ (middle) and $\mathbb R^3$ (right) with $\mathcal{V}$ denoting the boundary set, and $\mathcal{C}_i$ denoting the $i$th affine function activation set. 
}
\label{fig:Partion_sets}
\end{figure*}

\section{Problem statement}
%Piecewise Linear Regression}
We parametrize the piecewise linear function $f: \mathbb{R}^d \mapsto\mathbb R$ for the covariate-target relation as
\begin{align} \label{eq:classconf}
f(\bm  x;\bm  \beta, \bm  \alpha) = \max_{j \in [k_1]} \langle \bm  \beta_j, [\bm  x ;1]\rangle  - \max_{l \in [k_2]} \langle \bm  \alpha_l, [\bm  x ;1]\rangle, 
\end{align}
where $\bm \beta := [\bm {\beta}_1;\ldots;\bm {\beta}_{k_1}]$ and $\bm \alpha := [\bm {\alpha}_1;\ldots;\bm {\alpha}_{k_2}]$ collect the parameters for the corresponding affine functions. 
The resulting piecewise linear regression model is written as
\begin{equation} \label{eq:dataset}
     y_i = f(\bm  x_i; \bm  \beta^\star, \bm  \alpha^\star) + z_i,  \quad \forall i= 1, \ldots, n,
\end{equation}
where $\bm  \beta^\star$ and $\bm  \alpha^\star$ are the ground-truth parameters, $\{\bm  x_i\}_{i=1}^n$ (resp. $\{y_i\}_{i=1}^n$) are the samples of covariates (resp. target), and $\{z_i\}_{i=1}^n$ denote additive noise. 
To simplify notation, we introduce $\bm{\xi}_i := [\bm  x_i;1] $ for all $i \in [n]$. 
Then the regression model in \eqref{eq:dataset} is rewritten as a difference of max-linear functions model given by
\begin{equation}
\label{eq:def_model_xi}
y_i = \max_{j \in [k_1]} \langle \bm \xi_i,\bm{\beta}_j^\star \rangle - \max_{l \in [k_2]} \langle \bm \xi_i,\bm{\alpha}_l^\star \rangle + z_i.
\end{equation}

We consider the least squares estimator that minimizes the Mean Squared Error (MSE) loss function
\begin{equation}
\label{eq:loss_def_max0}
\ell\left( \bm \beta , \bm \alpha\right) =
\frac{1}{2n}\sum_{i=1}^{n}\left(y_i- \max_{j \in [k_1]} \langle\bm \xi_i,\bm \beta_j\rangle +  \max_{l \in [k_2]} \langle\bm \xi_i,\bm \alpha_l\rangle \right)^2.
\end{equation}
Since the loss function in \eqref{eq:loss_def_max0} is non-convex and non-smooth, the parameter estimation requires a careful design of the optimization algorithm. 
We propose a variant of the block gradient descent algorithm and present its convergence analysis in the following sections.

\section{Adaptive Block Gradient Descent Algorithm}

\begin{algorithm}[tb]
   \caption{(Adaptive Block Gradient Descent)}
   \label{algo:AlternatingGD}
\begin{algorithmic}
   \STATE {\bfseries Input:}  Dataset $\{\bm x_i, y_i\}_{i=1}^{n} $, model orders $(k_1,k_2)$, initial estimates $\bm \beta^0 = [\bm \beta_1^0; \ldots; \bm \beta_{k_1}^0] $, $\bm \alpha^0= [\bm \alpha_1^0; \ldots; \bm \alpha_{k_2}^0]$, and stop criterion $\gamma $. 
   \vspace{0.1cm}
   \REPEAT
   \vspace{0.2cm}
   \STATE \textbf{1. }${\bm\beta}^{t+1}_j \gets  \bm{\beta}_j^{t} - {\mu}_j(\bm \beta^t)  \nabla_{\bm \beta_j} \bm{\ell} \left( \bm{\beta}^{t}, \bm{\alpha}^{t}\right) $   $\quad \hspace{0.05cm}~~~\forall j \in [k_1]$
   \vspace{0.2cm}
   \STATE \textbf{2. }${\bm\alpha}^{t+1}_l \gets  \bm{\alpha}_l^{t} - {\mu}_l(\bm \alpha^t)  \nabla_{\bm \alpha_l} \bm{\ell} \left( \bm{\beta}^{t+1}, \bm{\alpha}^{t}\right) $ $~~~ \forall l \in [k_2]$
   \vspace{0.05cm}
   \STATE \textbf{3. } ${t\gets t+1}$
   \vspace{0.2cm}
   \UNTIL{$\displaystyle \max \left({\|\bm \beta^{t+1}-\bm \beta^t\|_2}/{\|\bm \beta^t\|_2},{\|\bm \alpha^{t+1} -\bm \alpha^{t}\|_2}/{\|\bm \alpha^t\|_2}\right)\leq \gamma$}
   \vspace{0.1cm}
   \STATE {\bfseries Output:} Final estimates $\hat{\bm \beta}= [\bm \beta_1^{t}; \ldots; \bm \beta_{k_1}^{t}]$, and $\hat{\bm \alpha} =[\bm \alpha_1^{t}; \ldots; \bm \alpha_{k_2}^{t}]$ 
\end{algorithmic}
\end{algorithm}

The adaptive block gradient descent (ABGD) algorithm to minimize the loss function in \eqref{eq:loss_def_max0} is summarized in Algorithm \ref{algo:AlternatingGD}.
ABGD is a variant of the block gradient descent algorithm, where the gradient is substituted by the generalized gradient \citep{hiriart1979new} and the step size varies across blocks adaptively with the iterates. 

To describe the algorithm, we introduce geometric objects derived from a max-affine function $\bm x \mapsto \max_{j \in [k]} \langle\bm \theta_j, [\bm x;1]\rangle$ of order $k$. 
On one hand, the $j$th affine function $\langle\bm \theta_j, [\bm x;1]\rangle$ achieves the maximum on the set 
\begin{equation}\label{eq:def_caljstar}
   \mathcal{C}_j(\bm \theta) := \{ \bm x \in \mathbb{R}^{d} ~:~ \langle [\bm x ;\ 1], \bm \theta_{j} - \bm \theta_{j'} \rangle > 0, ~ \forall j' \neq j \}
\end{equation}
for $j \in [k]$. 
On the other hand, multiple affine functions achieve the maximum simultaneously on the set 
\[
\mathcal{V}(\bm \theta) := \bigcup_{j' \neq j} \{ \bm x \in \mathbb{R}^{d} ~:~ \langle [\bm x ;\ 1], \bm \theta_{j}-\bm \theta_{j'} \rangle = 0 \},
\]
which is a null set with respect to any absolutely continuous probability measure. 
Then the embedding space $\mathbb{R}^{d+1}$ is partitioned into the maximizer sets $\{\mathcal{C}_j(\bm \theta)\}_{j=1}^k$ and their boundaries in $\mathcal{V}(\bm \theta)$. 
Indeed, $\{\mathcal{C}_j(\bm \theta)\}_{j=1}^{k}$ and $\mathcal{V}(\bm \theta)$ respectively denote open cells and zero-set in a special instance of tropical algebra called the max-plus algebra \citep{maragos2021tropical}. Figure \ref{fig:Partion_sets} presents examples of these sets. Furthermore, we can define the collection of training sample indices that activate the affine model parameter $\bm \beta_j$ as
\begin{equation}
    \mathcal{I}_j(\bm \beta) = \left\{i \in [n]: \bm x_i \in \mathcal{C}_j(\bm \beta)  \right\}. 
\end{equation}
Each iteration of ABGD starts by updating the estimate of $\bm \beta = [\bm {\beta}_1;\ldots;\bm {\beta}_{k_1}]$ via the block-wise gradient step 
\[
{\bm\beta}^{t+1}_j = \bm{\beta}_j^{t} - {\mu}_j^t(\bm \beta^t)  \nabla_{\bm \beta_j} \bm{\ell} \left( \bm{\beta}^{t}, \bm{\alpha}^{t}\right), 
\]
where the partial generalized gradient with respect to the variables in the $j$th block $\bm \beta_j$ is written as
\begin{equation}
\label{p_gradient} 
    \nabla_{\bm \beta_j}\ell(\bm \beta, \bm \alpha) 
    = \frac{1}{n} \sum_{i\in \mathcal{I}_j(\bm \beta)} \left(\langle\bm \xi_i,\bm \beta_j\rangle - \max_{l \in [k_2]} \langle \bm \xi_i,\bm{\alpha}_l \rangle-y_i\right)\bm \xi_i.
\end{equation}
This is shown to be a valid generalized gradient by \citet[Equation~7]{kim2024max}. Using $|\mathcal{I}_j(\bm \beta^t)|$ to denote the number of samples within the $j$th maximizer set, the step size used for updating the estimator for the $j$th block of $\bm{\beta}^t$ is determined adaptively as
\begin{equation}\label{eq:def_mu}
\mu_j(\bm{\beta}^{t}) = \begin{cases}
    n/|\mathcal{I}_j(\bm \beta^t)|%\left(\frac{n_j(\bm \beta^t)}{n} \right)^{-1} 
    & \text{if } ~|\mathcal{I}_j(\bm \beta^t)| \geq 1 \\ 
    0 &  \text{otherwise}.
\end{cases}  
\end{equation}
This step-size corresponds to the inverse of the empirical probability measure of the $j$th maximizer set. Note that if $|\mathcal{I}_j(\bm \beta^t)| =0$ then the affine model parameterized by $\bm \beta_j^t$ is inactive and consequently not modified by a gradient step.
Once the estimate for $\bm{\beta}$ is completed, the estimator for $\bm{\alpha}$ is updated similarly with $\bm{\beta}^{t+1}$ via the block-wise gradient step where 
the partial generalized gradient with respect to the $l$th block $\bm \alpha_l$ is written as
\begin{equation*}
     \nabla_{\bm \alpha_l}\ell(\bm \beta, \bm \alpha) 
    = \frac{1}{n} \sum_{i\in \mathcal{I}_l(\bm \alpha)} \left(\langle\bm \xi_i,\bm \alpha_l\rangle - \max_{j \in [k_1]} \langle \bm \xi_i,\bm{\beta}_j \rangle+y_i\right)\bm \xi_i
\end{equation*}
and the step size is determined by \eqref{eq:def_mu} with $\bm{\beta}^t$ substituted by $\bm{\alpha}^t$. 
This update rule applies recursively until the algorithm converges by satisfying the stop condition for a small absolute constant $\gamma > 0$.

\section{Theoretical Analysis of ABGD}\label{sec:theo}
We present the local convergence analysis of ABGD under a general class of covariate distributions that satisfy the following pair of properties. 
\begin{assumption}[Sub-Gaussianity]\label{assum:subg}
    The covariate vector $\bm x \in \mathbb{R}^d$ is zero-mean, isotropic, and $\eta$-sub-Gaussian, i.e. there exists $\eta >0$ such that %the following STATEment holds deterministically 
    \begin{equation}
        \sup\limits_{\bm u \in \mathbb{S}^{d-1}}\mathbb E \left[\mathrm{exp}\left(t\langle \bm u , \bm x\rangle\right)\right]  \leq \exp\left({t^2{\eta^2}/{2}}\right),\quad \forall~ t\in \mathbb R, \nonumber
    \end{equation}
    where $\|\cdot \|_{\psi_2}$ and $\mathbb{S}^{d-1}$ denote the sub-Gaussian norm and the unit sphere in $\ell_2^d$, respectively. %, and $\eta > 0 $ is some numerical constant.
\end{assumption} 
\begin{assumption}[Anti-Concentration]\label{assum:anti}
    There exist $\gamma,\zeta >0$ such that the covariate vector $\bm x \in \mathbb{R}^d$ satisfies
    \[
    \sup\limits_{\begin{subarray}{c} \bm u \in \mathbb{S}^{d-1}, \lambda \in \mathbb{R} \end{subarray}} 
    \mathbb{P}\left[ \left( \langle\bm u, \bm x \rangle +\lambda\right)^2\leq \epsilon \right]\leq \left( \gamma \epsilon\right)^\zeta, \quad \forall \epsilon >0.
    \]
\end{assumption}
On one hand, the Sub-Gaussian family is simply the collection of all densities that eventually (i.e., for any $|x| >a>0$) admit a decay rate equal to or faster than $\exp (-x^2)$. On the other hand, the Anti-Concentration property was shown to be satisfied by any log-concave distribution \citep[Appendix G]{ghosh2019max}. A non-exhaustive list of distributions that satisfy both properties is: (i) Gaussian (or Gaussian mixture), (ii) Uniform distribution on $[a,b]$, $\ell_2$-sphere or any bounded set, (iii) Beta distribution, $\mathrm{Beta}(\alpha,\beta)$ with $\alpha = \beta$, (iv) any distribution with bounded support (e.g. exponential, Laplace, Cauchy, etc), and (v) any mixture of these densities. Therefore, the scope of distributions we consider extends from the typical Gaussian assumption to include more non-trivial multi-modal distributions that can be encountered in machine learning applications and statistics.

Next, the statement of the main theorem makes use of the following definitions. First, we define a class of parameters that satisfy some geometric properties. 
\begin{definition} \label{eq:Theta_class}
Let $\kappa > 0$ and $\pi_{\min}, r \in [0,1]$. 
We define $\Theta(\kappa, \pi_{\min},r)$ to be a collection of all $[\bm \beta; \bm \alpha]$ with $\bm \beta =[\bm \beta_1; \ldots; \bm \beta_{k_1}]\in \mathbb{R}^{k_1(d+1)}$, $\bm \alpha =[\bm \alpha_1; \ldots; \bm \alpha_{k_2}]\in \mathbb{R}^{k_2(d+1)}$ satisfying the following properties: First, distinct weight vectors are separated by at least a discrepancy value $\kappa$, i.e.
\begin{equation}
\label{eq:defkappa}
\min_{j'\neq j}\|[\bm \beta_j-\bm \beta_{j'}]_{1:d} \|_2 \wedge \min_{l'\neq l}\|[\bm \alpha_l-\bm \alpha_{l'}]_{1:d} \|_2
\geq \kappa.
\end{equation}
Second, the probability that any linear function achieves the maximum in \eqref{eq:def_model_xi} with random $\bm x$ is at least $\pi_{\min}$, i.e.
\begin{equation}
\label{eq:def_pimin_pimax}
\min_{j\in[k_1]}\mathbb{P}\left(\bm x\in \mathcal{C}_j(\bm \beta) \right)\wedge\min_{l\in[k_2]}\mathbb{P}\left(\bm x\in \mathcal{C}_l(\bm \alpha) \right) \geq \pi_{\min}.
\end{equation}
Third, the probability that $(\bm \beta_j,\bm \alpha_l)$ achieves the two maxima in \eqref{eq:classconf} simultaneously with random $\bm x$ is bounded as 
    \begin{align}
        \label{eq:proba_intersection}
        \mathbb{P}&\left( \bm x \in \mathcal{C}_j(\bm \beta)\cap {C}_l(\bm \alpha)\right)\nonumber \leq \frac{\mathbb{P}\left( \bm x \in \mathcal{C}_j(\bm \beta)\right) }{k_1^r}\wedge\frac{\mathbb{P}\left( \bm x \in \mathcal{C}_l(\bm \alpha)\right) }{k_2^r},
    \end{align}
    for every affine pair $ (j,l) \in [k_1] \times [k_2]$. We can now proceed with the statement of the main theorem.
\end{definition}

\begin{theorem}\label{Theo:main}
Suppose that $\{(\bm x_i,z_i)\}_{i=1}^n$ are independent copies of a random tuple $(\bm{x},z)$ where $\bm{x} \in \mathbb R^d$ and $z \in \mathbb{R}$ are independent, $\bm{x}$ satisfies Assumptions \ref{assum:subg}--\ref{assum:anti} with parameters $\eta,\gamma,\zeta > 0$, and $z$ is zero-mean sub-Gaussian with variance $\sigma_z^2$. 
Then there exist absolute constants $C_1,C_2,c_3,R > 0$, for which the following statement holds with probability at least $1- ke^{-d}$ for all $[\bm \beta^\star;\bm \alpha^\star] \in \Theta(\kappa, \pi_{\min},r)$ with $r\in (1/2,1]$ and $k = k_1 \vee k_2$.
If $k> c_3$ and the initial estimates $(\bm \beta^0,\bm \alpha^0) $ satisfy
\begin{equation}\label{eq:nbr_theorem}
    \|\bm \beta^0-\bm \beta^\star\|_2 \vee \|\bm \alpha^0 - \bm\alpha^\star\|_2 \leq \kappa \rho
\end{equation}
with
\begin{equation}
\label{eq:choice_rho1}
\rho:= \left[\frac{R\pi_{\min}^{3/4}}{4 k^{2}} \cdot \log^{-1/2}\left(\frac{ k^{2}}{R\pi_{\min}^{3/4}}\right)\right] \wedge \frac{1}{4}
\end{equation}
and
\begin{equation}
\label{eq:cond:lem:lwb_gradient}
n \geq C_1 d\left(\sigma_z^2\vee 1\vee \eta^4 \right)  k^4\pi_{\min}^{-4(1+\zeta^{-1})}, 
\end{equation} 
then the sequence $\left(\bm \beta^t, \bm \alpha^t\right)_{t\in\mathbb{N}}$ by ABGD satisfies 
\begin{equation} \label{eq:theo_param_error}
\begin{aligned}
 \left\|\begin{bmatrix} \bm \beta^{t} \\ \bm \alpha^{t} \end{bmatrix} - \begin{bmatrix} \bm \beta^\star \\ \bm \alpha^\star \end{bmatrix} \right\|^2_2 \leq& \tau^t\left\|\begin{bmatrix} \bm \beta^{0} \\ \bm \alpha^{0} \end{bmatrix} - \begin{bmatrix} \bm \beta^\star \\ \bm \alpha^\star \end{bmatrix}\right\|^2_2 
 +\frac{C_2\sigma_z^2 d k \log(n/d)}{n}
\end{aligned}
\end{equation}
for $\tau\in (0,1)$ determined by $r$, $\pi_{\min}$, $k$, $\gamma$, $\zeta$ and $R$. 
\end{theorem}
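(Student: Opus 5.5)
We follow the template of the local analysis of gradient descent for max-affine regression \citep{ghosh2019max,kim2024max} and build a one-step contraction for each block update of ABGD. The error terms are then chained across the two half-steps of an iteration and unrolled over $t$. Write $\bm h^t_j := \bm\beta^t_j-\bm\beta^\star_j$ and $\bm g^t_l := \bm\alpha^t_l-\bm\alpha^\star_l$.

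\textbf{Decomposing the $\bm\beta$-update.} For $i\in\mathcal I_j(\bm\beta^t)$, insert the model \eqref{eq:def_model_xi} into \eqref{p_gradient}. The residual splits into four parts:
\begin{itemize}
\item the \emph{correct-region term} $\langle\bm\xi_i,\bm h^t_j\rangle$ on $\mathcal I_j(\bm\beta^t)\cap\mathcal I_j(\bm\beta^\star)$;
\item the \emph{mis-assignment term} $\langle \bm\xi_i,\bm\beta^t_j\rangle-\max_{j'}\langle\bm\xi_i,\bm\beta^\star_{j'}\rangle$ on $\mathcal I_j(\bm\beta^t)\setminus\mathcal I_j(\bm\beta^\star)$;
\item the \emph{cross term} $\max_l\langle\bm\xi_i,\bm\alpha^\star_l\rangle-\max_l\langle\bm\xi_i,\bm\alpha^t_l\rangle$, coupling to the $\bm\alpha$ error;
\item the \emph{noise} $-z_i$.
\end{itemize}
With the step size $n/|\mathcal I_j(\bm\beta^t)|$, the update is
\[
\bm h^{t+1}_j=\bigl(\bm I-\hat{\bm\Sigma}_j\bigr)\bm h^t_j+\text{(mis-assignment)}+\text{(cross)}+\text{(noise)},
\]
where $\hat{\bm\Sigma}_j$ is the empirical second moment of $\bm\xi_i$ on the selected set, normalized by $|\mathcal I_j(\bm\beta^t)|$.

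\textbf{Bounding the four pieces.} We control each uniformly over the neighborhood \eqref{eq:nbr_theorem}.
\begin{itemize}
\item \emph{Correct region.} Combine a uniform lower bound on the minimum eigenvalue of $\sum_{i\in\mathcal I_j(\bm\beta^\star)\cap\mathcal I_j(\bm\beta^t)}\bm\xi_i\bm\xi_i^\top$, via anti-concentration (Assumption~\ref{assum:anti}) and a VC/covering argument over polyhedral cells, with an upper bound from sub-Gaussianity (Assumption~\ref{assum:subg}). This gives $\|\bm I-\hat{\bm\Sigma}_j\|\le 1-c\,\pi_{\min}^{1+\zeta^{-1}}$, up to constants, once $n$ satisfies \eqref{eq:cond:lem:lwb_gradient}.
\item \emph{Mis-assignment.} These samples lie within distance $O(\|\bm h^t\|/\kappa)$ of the boundary set $\mathcal V(\bm\beta^\star)$, and there the residual is itself $O(|\langle\bm\xi_i,\bm h^t_{j}-\bm h^t_{j'}\rangle|)$. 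Using the probability mass of a slab of width $\rho$ together with sub-Gaussian tails, the term is $\lesssim \rho\sqrt{\log(1/\rho)}\,k\,\pi_{\min}^{-3/4}\|\bm h^t\|$. This is where the choice of $\rho$ in \eqref{eq:choice_rho1} makes it a small fraction of the contraction.
\item \emph{Noise.} Standard sub-Gaussian empirical-process bounds over the cells give $C\sigma_z^2 d\log(n/d)/n$ per block, hence $k$ times that in total.
\end{itemize}

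\textbf{The cross term: the main obstacle.} Its size is $\|\bm g^t\|$ weighted by the empirical mass of $\mathcal C_j(\bm\beta^\star)\cap\mathcal C_l(\bm\alpha^\star)$ relative to $\mathcal C_j(\bm\beta^\star)$. By \eqref{eq:proba_intersection} that ratio is at most $k_1^{-r}$ per pair $l$. Summing over $k_2$ pieces naively with Cauchy–Schwarz gives a factor $k^{1-r}$ per block; aggregating over blocks in squared norm should instead yield a coupling coefficient of order $k^{1/2-r}$.

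Hence $r>1/2$ and $k>c_3$ make the coupling strictly smaller than the contraction margin. The $\bm\alpha$-update is handled symmetrically, using $\bm\beta^{t+1}$, which already obeys the contraction bound. This produces a $2\times2$ recursion
\[
\begin{bmatrix}\|\bm h^{t+1}\|^2\\ \|\bm g^{t+1}\|^2\end{bmatrix}\le \bm M\begin{bmatrix}\|\bm h^{t}\|^2\\ \|\bm g^{t}\|^2\end{bmatrix}+\text{noise},
\]
where $\bm M$ has diagonal entries $1-c\pi_{\min}^{\cdot}$ and off-diagonal entries $O(k^{1/2-r})$, so its spectral radius is $\tau<1$.

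\textbf{Induction.} Inductively, the iterates stay inside the $\kappa\rho$ ball, since the noise floor is below $\kappa\rho$ under \eqref{eq:cond:lem:lwb_gradient}. Unrolling the recursion then gives \eqref{eq:theo_param_error}, with the geometric series of the noise absorbed into $C_2$. A union bound over the $k$ blocks yields the stated probability $1-ke^{-d}$.

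The delicate parts are making the cross-term bound uniform over the neighborhood, which requires intersection cells of $\bm\beta^t$ and $\bm\alpha^t$ that are perturbed rather than fixed, and getting the exponent $1/2-r$ right.
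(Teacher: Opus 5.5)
Your architecture is the same as the paper's. The paper also splits the block gradient into a max-affine part, whose contraction it imports from prior max-affine work rather than re-deriving it as you do. The remainder is a cross term $\bm b_j^t$, which it splits into $\mathcal T_1^j,\dots,\mathcal T_4^j$ and controls with the intersection condition. Your $k^{1/2-r}$ is, up to squaring and logarithms, its $k^{1-2r}$ in \eqref{eq:bjt_bound_final}.

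The genuine gap is the step you mark with ``should'': the cross term is not small, and no argument can make it small. Its main part (samples correctly assigned in both blocks) contributes $\sum_{l} w_{jl}\hat{\bm\Sigma}_{jl}\bm g_l^t$ to $\bm h_j^{t+1}$. Here $w_{jl}$ is the fraction of $\mathcal I_j(\bm\beta^t)$ lying in the $(j,l)$ cell, and $\hat{\bm\Sigma}_{jl}$ is the empirical second moment of $\bm\xi_i$ on that cell. Up to mis-assigned samples, $\sum_l w_{jl}\hat{\bm\Sigma}_{jl}$ is your $\hat{\bm\Sigma}_j$. So $w_{jl}\lesssim k^{-r}$ only makes this a spread-out average, not a small one. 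Concretely, for $\bm h_j^t=\bm g_l^t=\bm v$ your own recursion returns $\bm h_j^{t+1}=(\bm I-\hat{\bm\Sigma}_j)\bm v+\hat{\bm\Sigma}_j\bm v=\bm v$.

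This is a real obstruction, not a lossy estimate. ABGD commutes with the common shift $(\bm\beta_j,\bm\alpha_l)\mapsto(\bm\beta_j+\bm v,\bm\alpha_l+\bm v)$, because cells, residuals and step sizes are unchanged.
\begin{itemize}
\item In the noiseless case, every such shift of $(\bm\beta^\star,\bm\alpha^\star)$ with $\sqrt{k}\|\bm v\|\le\kappa\rho$ is a fixed point inside the basin. There the left side of \eqref{eq:theo_param_error} stays at $(k_1+k_2)\|\bm v\|^2$, while the right side tends to $0$.
\item With noise, the trajectories started at $(\bm\beta^\star,\bm\alpha^\star)$ and at its shift stay $\sqrt{k_1+k_2}\|\bm v\|$ apart forever, which contradicts \eqref{eq:theo_param_error} once $n$ is large.
\end{itemize}
Hence no $2\times2$ recursion on the raw errors with spectral radius below one exists, for any $r$ and $k$.

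The paper does not close this gap either. Its $k^{1-2r}$ rests on $\|\mathcal T_3^j\|^2\le\sum_l\|\bm E_l^j/\sqrt n\|^4\|\bm q_l^t\|^2$. That inequality treats the vectors $\frac1n\bm E_l^j{\bm E_l^j}^{\mathsf T}\bm q_l^t\in\mathbb R^{d+1}$ as orthogonal, but disjoint sample supports do not make them so. A provable statement has to measure the error modulo the common shift, as in \eqref{eq:ambg_shift}, or fix a normalization. It then needs a new argument that the coupling is small on the shift-free component, since the $\hat{\bm\Sigma}_{jl}$ vary with $l$.

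A second step also fails as you justify it. The bound $\|\bm I-\hat{\bm\Sigma}_j\|\le 1-c\,\pi_{\min}^{1+\zeta^{-1}}$ needs $\lambda_{\max}(\hat{\bm\Sigma}_j)<2$. Sub-Gaussianity only gives $\lambda_{\max}(\hat{\bm\Sigma}_j)\lesssim(\eta^2\vee 1)\log(e/\pi_{\min})$, and with the undamped step $n/|\mathcal I_j(\bm\beta^t)|$ the contraction really does break. Take Gaussian $\bm x$, $\bm\beta^\star_j=[\bm e_j;0]$ for $j\in[4]$, and let $M_4$ be the maximum of four standard normals. The compression of $\hat{\bm\Sigma}_j$ to $\mathrm{span}(\bm e_j,\bm e_{d+1})$ is approximately
\[
\begin{bmatrix}\mathbb E M_4^2 & \mathbb E M_4\\ \mathbb E M_4 & 1\end{bmatrix}\approx\begin{bmatrix}1.55 & 1.03\\ 1.03 & 1\end{bmatrix},
\]
whose top eigenvalue is about $2.34$; in the analogous $k$-piece example it grows like $2\log k$. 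So a single $\bm\beta$-step expands some direction by a factor of at least about $1.34$. The paper imports this piece from prior max-affine work rather than proving it. You would need that result's exact hypotheses, or a damped step size.
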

The proof is deferred to Appendix \ref{append:proof:theo}. Theorem \ref{Theo:main} implies local linear convergence of a properly initialized ABGD under the assumed covariate model. The speed of convergence, determined by $\tau \in (0,1)$, is provided explicitly in \eqref{eq:bjt_bound_final}. We next present three key interpretations of Theorem \ref{Theo:main}.
\begin{enumerate}
    \item The minimax optimal linear scaling of the sample size $n$ with respect to $d$ is achieved up to the logarithmic factor $\log(n/d)$ (see Lemma \ref{lemm:minimax} in the Appendix for the minimax lower bound).
    \item In the ``well-balanced'' case, i.e. $\pi_{\min}=\Omega(1/k)$, the size of the basin of attraction $\rho$ in \eqref{eq:choice_rho1} becomes $\Omega(k^{-11/4})$. Therefore, the basin of attraction shrinks only by the orders $k_1,k_2$ of the DoMA model.
    \item In the well-balanced case, if $\bm x \sim \mathrm{Normal}(\bm 0, \bm I_d)$, which implies $\zeta =1/2$, $\gamma =e $, and $\eta =1$, then the sample complexity requirement of $n$ becomes $\mathcal{O}((\sigma_z^2\vee 1)dk^8)$. 
    \item Consider the case when $\{\bm \beta_j\}_{j=1}^k$ and $\{\bm \alpha_l\}_{l=1}^k$ are sampled independently from $\mathcal{N}(0, I_{d+1})$. In this case, we expect the model parameters to be highly uncorrelated. This makes the linear regions of the first max-affine function partition all other linear regions of the second max-affine function, thus leading to $r \approx 1$. In this case, for a normalized dataset, i.e. $\eta \approx 1$, and we can have $c_3 = 1$. In other words, for this scenario, Theorem \ref{Theo:main} holds for any $k \geq 2$ (the case when $k = 1$ reduces to linear regression). In general, the value of $c_3$ depends on the quantity $\phi(k;r)$ shown in \eqref{eq:bjt_bound_final}. 
\end{enumerate}
Another important observation is that the class of DoMA functions captured by Theorem \ref{Theo:main} is characterized by $\Theta(\kappa,\pi_{\min},r)$ presented in Definition \ref{eq:Theta_class}. The conditions in \eqref{eq:defkappa} and \eqref{eq:def_pimin_pimax} are general in the sense that Theorem \ref{Theo:main} holds for any valid $\kappa>0$, and $\pi_{\min} \in (0,1]$. However, the condition on the minimum intersection probability is only general when $r=0$. The currently available proof techniques only allow the statement of Theorem \ref{Theo:main} when $r\in(1/2,1]$. This range of $r$ captures DoMA functions where each element in $\{\mathcal{C}_j(\bm \beta)\}_{j=1}^{k_1}$ is properly partitioned by $\{\mathcal{C}_l(\bm \alpha)\}_{l=1}^{k_2}$ and vice versa.

Note that Theorem \ref{Theo:main} provides the performance guarantees of the parameter estimation by ABGD. 
Ultimately, it is of interest to estimate the ground-truth DoMA function from the available samples. This motivates the following corollary, whose proof is deferred to Appendix \ref{append:proof:theo_general}.  
\begin{corollary}\label{theo:general}
    Instate the assumptions of Theorem \ref{Theo:main}, let $(\hat{\bm \beta},\hat{\bm \alpha})$ denote the final parameter estimate by ABGD, and $k =k_1\vee k_2$. Then it holds with probability at least $1-ke^{-d}$ that
\begin{equation}
\label{eq:cor_res}
\begin{aligned}
 \mathbb E_{\bm x} \left[\left|f(\bm x; \hat{\bm \beta}, \hat{\bm \alpha} ) - f(\bm x; \bm \beta^\star, \bm \alpha^\star ) \right|^2 \right]
 \leq
  \frac{C\sigma_z^2 d k \log(n/d)}{n},
\end{aligned}
\end{equation}
where $f$ is the DoMA function defined in \eqref{eq:classconf} and $\bm x \in \mathbb R^d$ is a random vector satisfying Assumptions \ref{assum:subg} and \ref{assum:anti}, independent of everything else. 
\end{corollary}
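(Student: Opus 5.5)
We reduce the corollary to the parameter-error bound \eqref{eq:theo_param_error} of Theorem~\ref{Theo:main}, using the fact that the DoMA map is Lipschitz in its parameters. All statements are on the event of probability at least $1-ke^{-d}$ where the theorem holds.

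\textbf{Step 1: pointwise Lipschitz bound.} For every $\bm x$, with $\bm\xi=[\bm x;1]$, the elementary inequality $|\max_j a_j-\max_j b_j|\le \max_j|a_j-b_j|$ gives
\[
\bigl|f(\bm x;\hat{\bm\beta},\hat{\bm\alpha})-f(\bm x;\bm\beta^\star,\bm\alpha^\star)\bigr|
\le \max_{j}|\langle\bm\xi,\hat{\bm\beta}_j-\bm\beta_j^\star\rangle|+\max_{l}|\langle\bm\xi,\hat{\bm\alpha}_l-\bm\alpha_l^\star\rangle|.
\]
Squaring, applying $(a+b)^2\le 2a^2+2b^2$, and bounding each maximum by a sum over blocks yields
\[
|f(\bm x;\hat{\bm\beta},\hat{\bm\alpha})-f(\bm x;\bm\beta^\star,\bm\alpha^\star)|^2\le 2\sum_j\langle\bm\xi,\hat{\bm\beta}_j-\bm\beta_j^\star\rangle^2+2\sum_l\langle\bm\xi,\hat{\bm\alpha}_l-\bm\alpha_l^\star\rangle^2 .
\]

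\textbf{Step 2: take the expectation over the fresh $\bm x$.} The estimate is fixed given the training data, so it is independent of $\bm x$. Since $\bm x$ is zero-mean and isotropic (Assumption~\ref{assum:subg}), $\mathbb E[\bm\xi\bm\xi^\top]=\mathrm{diag}(\bm I_d,1)=\bm I_{d+1}$. Hence $\mathbb E\langle\bm\xi,\bm v\rangle^2=\|\bm v\|_2^2$, and therefore
\[
\mathbb E_{\bm x}|f(\bm x;\hat{\bm\beta},\hat{\bm\alpha})-f(\bm x;\bm\beta^\star,\bm\alpha^\star)|^2\le 2\bigl\|[\hat{\bm\beta};\hat{\bm\alpha}]-[\bm\beta^\star;\bm\alpha^\star]\bigr\|_2^2 .
\]
Only isotropy is needed here, not sub-Gaussianity.

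\textbf{Step 3: control the final iterate.} Apply \eqref{eq:theo_param_error} at the output iterate $t$. The noise term is $C_2\sigma_z^2dk\log(n/d)/n$. The transient term is $\tau^t\|\cdot\|^2$, and by \eqref{eq:nbr_theorem} the initial error is at most $2\kappa^2\rho^2$.

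This step is the main obstacle, because the corollary states a bound with no transient term. The output is $\hat{\bm\beta}=\bm\beta^t$ for $t$ at which the stopping rule fires, and "final estimate" should be read as $t$ large enough that $\tau^t\cdot 2\kappa^2\rho^2\le C_2\sigma_z^2dk\log(n/d)/n$; equivalently, take $t\to\infty$, which is legitimate since $\tau<1$ and the bound holds for all $t$ simultaneously on the same event. The transient term is then absorbed into the noise term, at most doubling the constant.

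If one insists on the literal stopping rule with parameter $\gamma$, there are two options. One may choose $\gamma$ small enough (depending on $n$) that stopping forces $t\gtrsim\log(n/(\sigma_z^2 dk))/\log(1/\tau)$. Alternatively, one notes that the per-step contraction makes $\|\bm\theta^{t+1}-\bm\theta^t\|$ comparable to the distance to the noise ball, so that a small step implies the iterate is within that ball. I would adopt the first reading and state it explicitly.

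Combining Steps 2 and 3 gives \eqref{eq:cor_res} with $C=4C_2$, up to the absorption constant.
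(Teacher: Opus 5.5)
Your proof is correct, but it gets from parameter error to prediction error by a different route than the paper.

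\textbf{The paper's route.} It also separates the $\bm\beta$- and $\bm\alpha$-parts with $(a+b)^2\le 2a^2+2b^2$. It then expands each max-affine difference over the cell intersections $\hat{\mathcal{B}}_j\cap\mathcal{B}^\star_{j'}$ and writes $\hat{\bm\beta}_j-\bm\beta^\star_{j'}=\hat{\bm h}_j+\bm v^\star_{jj'}$. The matched term $\mathrm{(I)}$ is bounded by $\|\hat{\bm\beta}-\bm\beta^\star\|_2^2$ via isotropy, exactly as in your Step 2. The misclassification term $\mathrm{(II)}$, over the pairs $j\neq j'$, is controlled by importing \citet[Lemma B.5]{kanj2024sparse}. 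That lemma brings $\gamma$, $\zeta$, $\pi_{\min}$ and $k$ into the final constant.

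\textbf{Your route.} The pointwise inequality $|\max_j a_j-\max_j b_j|\le\max_j|a_j-b_j|$ removes the misclassification term altogether. As a result:
\begin{itemize}
\item you need only $\mathbb E[\bm\xi\bm\xi^\top]=\bm I_{d+1}$ and independence of the fresh $\bm x$;
\item you get the clean constant $2$;
\item your bound $\mathbb E_{\bm x}|f(\bm x;\hat{\bm\beta},\hat{\bm\alpha})-f(\bm x;\bm\beta^\star,\bm\alpha^\star)|^2\le 2\|[\hat{\bm\beta};\hat{\bm\alpha}]-[\bm\beta^\star;\bm\alpha^\star]\|_2^2$ holds for any pair of parameter vectors, independently of the basin of attraction.
\end{itemize}
So your argument is self-contained and simpler. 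The paper's cell-wise decomposition would only gain something if the indicators in $\mathrm{(I)}$ were kept, so that each $\|\hat{\bm h}_j\|_2^2$ is weighted by its own cell. The paper discards them and bounds $\mathrm{(I)}$ by the full parameter error, so it gains nothing here.

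\textbf{Step 3.} This matches the paper's treatment: take $t$ large enough that $\tau^t$ times the initial error is dominated by the noise term. Keep your first reading of the stopping rule. Your second option would require a one-step recursion with an explicit noise term, and Theorem \ref{Theo:main} only states the unrolled bound.
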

This corollary translates the parameter estimation error in Theorem \ref{Theo:main} to the generalization error in predicting the ground-truth DoMA function in the expected squared loss and inherits the optimal linear scaling of $n$ with respect to $d$.
%%%%%%%%%%%%%%%%%%%%%%%%%%%%%%%%%%%%%%%%%%%%%%%%%%%%%%%%%%%%%%%%%%%%%%%%%%%%%%%%%
\begin{figure*}[t]
    \centering
    \includegraphics[width=0.95\linewidth]{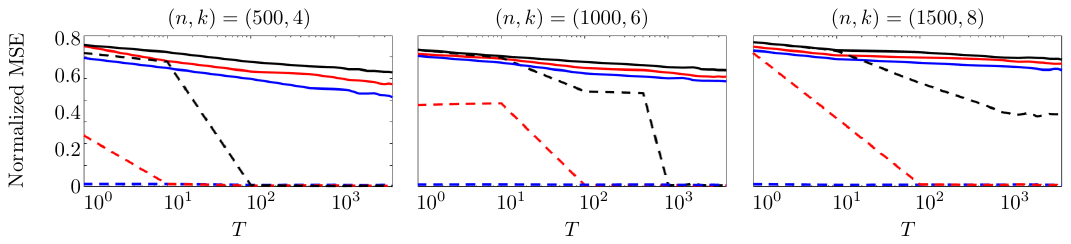}
    \caption{Median of the normalized mean squared test error across $50$ Monte Carlo iterations for initialization (Algorithm \ref{algo:Initialization}, solid) followed by ABGD (Algorithm \ref{algo:AlternatingGD}, dashed), when $d\in \{50,60,70\}$ (blue, red, black) for different $(n,k)$ pairs where $k_1=k_2=k$.  }
    \label{fig:MCV}
\end{figure*}

\section{Initializing ABGD} \label{Sec:Spectral}
Since the square loss in \eqref{eq:loss_def_max0} poses a non-convex estimation problem, the local convergence result by Theorem \ref{Theo:main} requires a suitable initialization within the basin of attraction of the ground truth DoMA function parameters. 
This section surveys recent techniques that might be adapted to obtain the desired initial estimates for DoMA regression. 
Toward this end, we first note that the DoMA function in \eqref{eq:classconf} is a special instance of the class of multi-index models, i.e. there exists a nonlinear multivariate function $g: \mathbb{R}^{k_1+k_2} \to \mathbb{R}$ such that $f$ in \eqref{eq:classconf} is equivalently rewritten as 
$
f(\bm x) = g(\bm\Theta^\mathsf{T} \bm x ) 
$
with 
$
\bm \Theta = [
        \bm \beta_1,\ldots, \bm \beta_{k_1}, \bm \alpha_1, \ldots,\bm \alpha_{k_2}
]
$.
Various computational methods to estimate the column space of $\bm \Theta$ and their theoretical analysis have been developed \citep{li1991sliced,xia2002adaptive,ghosh2021max}. 
Most of these methods were originally developed for semi-parametric learning. However, a quantized search over the individual parameter vectors in the reduced subspace is feasible when $k_1+k_2$ is small and $g$ is known. Below, we discuss a subset of the subspace methods relevant to DoMA regression.  

\citet{li1991sliced} proposed sliced inverse regression (SIR). 
When the covariates follow an elliptically symmetric distribution (e.g. Gaussian, logistic, and Student's $t$), SIR estimates $\mathrm{span}(\bm \Theta)$ via principal component on the empirical version of $\mathrm{Cov}(\mathbb{E}\left[\bm x|y\right])$ where $y \in \mathbb{R}$ and $\bm x \in \mathbb{R}^d$ denote the target and covariates.   
On the other hand, \citet{xia2002adaptive} developed the outer product of gradients (OPG) based on the observation that the sample average of the outer product of the gradient, $\nabla_{\bm x} f(\bm x) = \bm \Theta \nabla_{\bm u} \left. g(\bm u) \right|_{\bm u = \bm \Theta^\mathsf{T}\bm x}$, spans the column space of $\bm \Theta$. However, the guarantees for OPG \citep[Condition 5 (b)]{xia2002adaptive} and SIR \citep[Assumption 1]{tan2018convex} require a smoothness condition on $\mathbb{E}\left[\bm x|y\right]$ which is not trivial to show for the DoMA functions. We motivate a proof direction for such conditions in the A

\begin{algorithm}[tb]
   \caption{(Initialization Algorithm)}
   \label{algo:Initialization}
\begin{algorithmic}
   \STATE {\bfseries Input:} Dataset $\{\bm x_i, y_i\}_{i=1}^{n}$, model orders $(k_1,k_2)$, and repetition count $T$.
   \vspace{0.1cm}
   \STATE \textbf{1.} Construct $\hat{\bm m}_1$ and $\hat{\bm M}_2$, the data-driven versions of $\bm m_1$ and $\bm M_2$ and let $\hat{\bm M} = \hat{\bm m}_1 \hat{\bm m}_1^{\top} + \hat{\bm M}_2$.
   \vspace{0.2cm}
   \STATE \textbf{2.} Construct $\hat{\bm V} \in \mathbb R^{d\times (k_1+k_2-1)}$ as the dominant eigen-space of $\hat{\bm M}$.
   \vspace{0.2cm}
   \STATE \textbf{3.} Randomly sample the subspace spanned by $\hat{\bm V}$ to get a candidate pair $(\tilde{\bm \beta}, \tilde{\bm \alpha})$.
   \vspace{0.2cm}
   \STATE \textbf{4.} Refine $(\tilde{\bm \beta}, \tilde{\bm \alpha})$ using a few ABGD iterations.
   \vspace{0.2cm}
   \STATE \textbf{5.} repeat "3." and "4." $T$ times and select $(\bm \beta^0, \bm \alpha^0)$ that minimizes \eqref{eq:loss_def_max0}.
   \vspace{0.1cm}
   \STATE {\bfseries Output:} Initial estimates $\bm \beta^0 = [\bm \beta_1^0; \ldots; \bm \beta_{k_1}^0]$, and $\bm \alpha^0 = [\bm \alpha_1^0; \ldots; \bm \alpha_{k_2}^0]$.
\end{algorithmic}
\end{algorithm}

On the other hand, the spectral method, which does not require the smoothness condition, has been widely used to obtain initial estimates for inverse problems, e.g. phase retrieval \citep{candes2015phase} and its generalization to max-affine regression \citep{ghosh2021max}.
In a nutshell, the spectral method constructs an empirical estimate $\hat{\bm M}$ of a proxy matrix $\bm M$ with the observations of the covariate-target pairs so that $\bm M$ spans the column space of $\bm \Theta$. 
A basis for $\mathrm{span}(\bm \Theta)$ is estimated as the dominant eigenvectors via principal component analysis on $\hat{\bm M}$. 
In the remainder, we show that the proxy matrix $\bm M$ proposed by \citet{ghosh2021max} for max-affine regression can also be used for its generalization to the DoMA function in \eqref{eq:classconf}. 
The following lemma, proved in Appendix~\ref{append:proof:m1m2}, illustrates that the proxy matrix $\bm M$ spans a subspace spanned by the pairwise difference $[\bm \beta_j]_{1:d} - [\bm \alpha_l]_{1:d}$ for $(j,l) \in [k_1] \times [k_2]$ when the covariates are i.i.d. standard Gaussian. 
The Gaussianity of covariates is crucial to invoke Stein's lemma for the calculation of expectations. 

\begin{lemma} \label{lemma:M1M2}
    Let $\bm x\sim \mathrm{Normal}(\bm 0, \bm I_d)$ and $y= f(x;\bm \beta,\alpha)+z$ with $f(\cdot)$ defined from \eqref{eq:classconf} and  $\bm z\sim \mathrm{Normal}(\bm 0, \sigma_z^2\bm I_d)$ independent of $\bm x$. Define $\bm M  = \bm m_1\bm m_1^\mathsf{T} +{\bm M}_2$ with
 \begin{equation} 
     \bm m_1 = \mathbb E\left[y\bm x\right], \quad \bm M_2 = \mathbb E\left[ y\left(\bm x\bm x^\mathsf{T}- \bm I_d\right)\right].
 \end{equation}
 Then $\bm m_1$ and $\bm M_2$ are explicitly written as
 \begin{equation*}
     \bm m_1  = \sum_{(j,l)\in [k_1]\times[k_2]} \left[\bm \beta_j -\bm \alpha_l\right]_{1:d}\mathbb P\left( \bm x \in \mathcal{C}_j(\bm \beta)\cap \mathcal{C}_l(\bm \alpha)\right)
 \end{equation*}
 and 
 \begin{equation*}
     \bm M_2 = \sum_{(j,l)\in [k_1]\times[k_2]} [\bm \beta_j -\bm \alpha_l]_{1:d} \mathbb{E}\left[\mathbf{1}_{\{\bm x\in \mathcal{C}_j(\bm \beta)\cap\mathcal{C}_l(\bm \alpha)\}}\bm x \right]^\mathsf{T}.
 \end{equation*}
\end{lemma}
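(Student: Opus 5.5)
Since $z$ is zero-mean and independent of $\bm x$, we have $\mathbb E[z\bm x]=\bm 0$ and $\mathbb E[z(\bm x\bm x^\mathsf{T}-\bm I_d)]=\bm 0$. Hence $\bm m_1=\mathbb E[f(\bm x)\bm x]$ and $\bm M_2=\mathbb E[f(\bm x)(\bm x\bm x^\mathsf{T}-\bm I_d)]$, and it suffices to work with the noiseless function $f(\bm x)=f(\bm x;\bm\beta,\bm\alpha)$. The plan is to apply Stein's lemma (Gaussian integration by parts) once for $\bm m_1$ and twice, in a first-order form, for $\bm M_2$. This uses the fact that $f$ is Lipschitz and piecewise linear, so its weak gradient exists almost everywhere.

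\textbf{Step 1: the gradient of $f$.} The sets $\mathcal V(\bm\beta)\cup\mathcal V(\bm\alpha)$ form a finite union of hyperplanes, which is Lebesgue-null. Off this set, $\bm x$ lies in exactly one cell $\mathcal C_j(\bm\beta)\cap\mathcal C_l(\bm\alpha)$. There $f(\bm x)=\langle[\bm x;1],\bm\beta_j-\bm\alpha_l\rangle$, so
\[
\nabla f(\bm x)=\sum_{(j,l)}[\bm\beta_j-\bm\alpha_l]_{1:d}\mathbf 1_{\{\bm x\in\mathcal C_j(\bm\beta)\cap\mathcal C_l(\bm\alpha)\}}\quad\text{a.e.}
\]
Because $f$ is globally Lipschitz (hence absolutely continuous along lines) and has linear growth, Stein's lemma $\mathbb E[g(\bm x)\bm x]=\mathbb E[\nabla g(\bm x)]$ applies to $g=f$. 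This immediately yields the stated formula for $\bm m_1$, since the expectation of each indicator is the cell probability.

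\textbf{Step 2: $\bm M_2$.} A second Gaussian integration by parts would require the distributional Hessian of $f$, which is a singular measure supported on the cell boundaries. To avoid it, apply Stein's lemma coordinatewise to the function $g_b(\bm x)=f(\bm x)x_b$. This function is again locally Lipschitz with polynomial growth, and
\[
\nabla g_b(\bm x)=x_b\nabla f(\bm x)+f(\bm x)\bm e_b .
\]
Then
\[
\mathbb E[f(\bm x)x_ax_b]=\mathbb E[x_b\,\partial_a f(\bm x)]+\delta_{ab}\mathbb E[f(\bm x)].
\]
Subtracting $\delta_{ab}\mathbb E[f(\bm x)]$ gives
\[
\bm M_2=\mathbb E[\nabla f(\bm x)\bm x^\mathsf{T}].
\]
Substituting the a.e. expression for $\nabla f$ from Step 1 produces
\[
\bm M_2=\sum_{(j,l)}[\bm\beta_j-\bm\alpha_l]_{1:d}\,\mathbb E\!\left[\mathbf 1_{\{\bm x\in\mathcal C_j(\bm\beta)\cap\mathcal C_l(\bm\alpha)\}}\bm x\right]^\mathsf{T},
\]
which is exactly the claim. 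The ordering of the factors (gradient on the left, $\bm x$ on the right) comes from taking the $(a,b)$ entry with $\partial_a$.

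\textbf{Main obstacle.} The delicate point is justifying Stein's lemma for the nonsmooth functions $f$ and $f\,x_b$. I would argue this via the one-dimensional identity $\mathbb E[h(x)x]=\mathbb E[h'(x)]$ for absolutely continuous $h$ with $\mathbb E|h'|<\infty$. This identity is applied along each coordinate, conditioning on the other coordinates, and Fubini's theorem then combines the coordinates. Along almost every coordinate line, $f$ restricted to the line is piecewise linear with finitely many kinks and is therefore absolutely continuous. The boundary set is null, so the choice of generalized gradient on $\mathcal V$ does not matter. Gaussian tails make all the relevant moments finite.
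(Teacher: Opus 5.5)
Your proof is correct, and it takes a genuinely different (and more careful) route than the paper. The paper first writes $y=\sum_{(j,l)}\mathbf 1_{\{\bm x\in\mathcal C_j(\bm\beta)\cap\mathcal C_l(\bm\alpha)\}}\langle\bm\beta_j-\bm\alpha_l,\bm\xi\rangle+z$. It then applies Stein's lemma to each summand separately, treating the indicator as if it had zero derivative. For $\bm M_2$ it does this for each of the four terms obtained by expanding $\langle\bm\beta_j-\bm\alpha_l,\bm\xi\rangle(\bm x[\bm x]_p-\bm e_p)$, after which the extra terms cancel.

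Term by term this is not valid. Each summand jumps across its cell boundary, so Gaussian integration by parts produces a surface integral there. Already in $d=1$, $\mathbb E[\mathbf 1_{\{x>0\}}(x+1)x]=1/2+1/\sqrt{2\pi}\neq\mathbb P(x>0)$. The paper's formulas are still right because these surface terms cancel after summing over cells: adjacent affine pieces agree on shared faces by continuity of $f$. The paper leaves this step implicit.

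Your approach never creates boundary terms. You apply Stein's lemma only to the continuous functions $f$ and $f\,x_b$, and the cell decomposition enters only to identify the a.e.\ gradient. The identity $\bm M_2=\mathbb E[\nabla f(\bm x)\bm x^\mathsf{T}]$ replaces the paper's expansion-and-cancellation bookkeeping and avoids the singular distributional Hessian. It also makes clear that $\bm m_1=\mathbb E[\nabla f]$ and $\bm M_2=\mathbb E[\nabla f\,\bm x^\mathsf{T}]$ hold for any Lipschitz $f$, with the DoMA structure used only in the final substitution.

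One caveat you share with the paper concerns the claim that $\mathcal V(\bm\beta)\cup\mathcal V(\bm\alpha)$ is Lebesgue-null, so that the cells cover $\mathbb R^d$ up to a null set. This requires the $\bm\beta_j$'s, and likewise the $\bm\alpha_l$'s, to be pairwise distinct. If $\bm\beta_1=\bm\beta_2$, then $\mathcal C_1(\bm\beta)$ and $\mathcal C_2(\bm\beta)$ are both empty and the formulas can fail. Within the class $\Theta(\kappa,\pi_{\min},r)$ this is guaranteed by $\kappa>0$.
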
 
Note that the DoMA function in \eqref{eq:classconf} remains the same when all parameter vectors are shifted by an arbitrary constant vector $\bm v \in \mathbb{R}^{d+1}$. 
The next lemma shows that the span of the pairwise difference of parameter vectors indeed coincides with the span of all parameter vectors shifted by one of them. 

\begin{lemma}
     \label{lemm:Span}
     Let $\mathcal{B} = \{\bm \beta_j\}_{j=1}^{k_1}$ and $\mathcal{A} = \{\bm \alpha_l\}_{l=1}^{k_2}$ be subsets of $\mathbb R^{d+1}$. 
     Then we have
     \begin{equation}             
     \label{eq:span_statement}
     \mathrm{span}\left(\mathcal{B}\ominus\mathcal{A}\right) = \mathrm{span}\left(-\bm{v}+ \left(\mathcal{B}\cup \mathcal{A}\right)\right), \quad \forall \bm v \in \mathcal{B} \cup \mathcal{A}
     \end{equation}
     and
     \begin{align}
        \mathrm{dim} \left[\label{eq:rank_statement}
         \mathrm{span}\left(\mathcal{B}\ominus\mathcal{A}\right)\right] \leq k_1+k_2-1, 
     \end{align}
     where $\ominus$ denotes the Minkowski set difference, i.e. the pairwise difference of elements from each set. 
\end{lemma}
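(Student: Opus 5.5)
We prove the two claims of Lemma~\ref{lemm:Span} by elementary linear algebra, using only that every element of $\mathcal{B}\ominus\mathcal{A}$, and every element of $-\bm v+(\mathcal{B}\cup\mathcal{A})$, is a difference of two vectors from $\mathcal{B}\cup\mathcal{A}$ with at most one of them in each set. We first fix $\bm v$ and show the two inclusions in \eqref{eq:span_statement}. By symmetry we may assume $\bm v\in\mathcal{B}$, say $\bm v=\bm\beta_{j_0}$. The case $\bm v\in\mathcal{A}$ is identical after swapping the roles of the two sets: the set $\mathcal{A}\ominus\mathcal{B}=-(\mathcal{B}\ominus\mathcal{A})$ has the same span, and the right-hand side of \eqref{eq:span_statement} is symmetric in $\mathcal{A}$ and $\mathcal{B}$.

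For the inclusion ``$\subseteq$'', take a generator $\bm\beta_j-\bm\alpha_l$ of the left-hand side. We write
\[
\bm\beta_j-\bm\alpha_l=(\bm\beta_j-\bm v)-(\bm\alpha_l-\bm v).
\]
Both terms on the right lie in $-\bm v+(\mathcal{B}\cup\mathcal{A})$, so the generator lies in its span.

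For the inclusion ``$\supseteq$'', we treat the two kinds of generators of the right-hand side separately. A generator of the form $\bm\alpha_l-\bm v=\bm\alpha_l-\bm\beta_{j_0}$ is the negative of an element of $\mathcal{B}\ominus\mathcal{A}$. A generator of the form $\bm\beta_j-\bm v$ can be routed through an arbitrary $\bm\alpha_1\in\mathcal{A}$:
\[
\bm\beta_j-\bm\beta_{j_0}=(\bm\beta_j-\bm\alpha_1)-(\bm\beta_{j_0}-\bm\alpha_1).
\]
This requires $\mathcal{A}\neq\emptyset$, which holds because $k_2\ge 1$; analogously $k_1\ge1$ is used when $\bm v\in\mathcal{A}$. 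Combining the two inclusions gives \eqref{eq:span_statement} for every $\bm v\in\mathcal{B}\cup\mathcal{A}$.

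The dimension bound \eqref{eq:rank_statement} then follows by counting generators. The set $-\bm v+(\mathcal{B}\cup\mathcal{A})$ has at most $k_1+k_2$ elements, and one of them is $\bm v-\bm v=\bm 0$. Hence its span is generated by at most $k_1+k_2-1$ vectors, and \eqref{eq:span_statement} transfers this bound to $\mathrm{span}(\mathcal{B}\ominus\mathcal{A})$.

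There is no genuine obstacle in this argument. The only points needing care are that both sets are nonempty, which makes the routing step in ``$\supseteq$'' valid, and that coincident vectors in $\mathcal{B}\cup\mathcal{A}$ only lower the count, so they are harmless.
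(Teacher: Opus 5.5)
Your proof is correct and follows essentially the same route as the paper. The paper takes $\bm v=\bm\alpha_1$ and writes a chain of span equalities: it shifts every vector by $\bm v$, uses $\bm\alpha_1-\bm v=\bm 0$, and then routes differences through a common element. It reads the dimension bound off the $k_1+(k_2-1)$ remaining generators. Your double-inclusion version also makes explicit two things the paper leaves implicit: the symmetry reduction, and the nonemptiness of $\mathcal{A}$ and $\mathcal{B}$ needed for the routing step.
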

The proof of this lemma is deferred to Appendix \ref{append:proof:span} and might be of independent interest.
\begin{figure}
    \centering
    \includegraphics[width=0.5\linewidth]{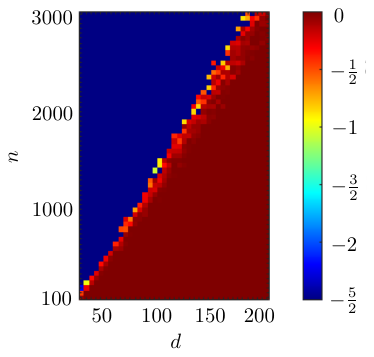}
    \caption{Median of $\log_{10}E(\hat{\bm{\beta}},\hat{\bm{\alpha}})$ for different ($n$,$d$) pairs using 50 Monte Carlo iterations for $k_1= k_2=3$ when the covariates follow the Gaussian covariate distribution with $\sigma_z=0.1$.}
    \label{fig:synthetic1}
\end{figure}
Furthermore, the proof of \citep[Theorem~2]{ghosh2021max} relies on the fact that every max-affine function is convex, which no longer holds for difference max-affine functions. Still, Lemmas \ref{lemma:M1M2} and \ref{lemm:Span} show that $\bm m_1$ and $\bm M_2$ are in the span of all the individual DoMA model parameters. This motivates us to adapt the initialization algorithm proposed by \citet{ghosh2021max} to our DoMA modeling as presented in Algorithm \ref{algo:Initialization}. This algorithm first estimates the span of all DoMA model parameters, then randomly samples this lower-dimensional subspace $T$ times to select the best candidate to initialize ABGD. The total runtime cost of this algorithm is $\mathcal{O}(nd^2+d^3 +nde^k)$. In the original setting, this algorithm is guaranteed to recover a suitable initial estimate given $\mathcal{O}(d~\mathrm{poly} k)$ samples. Given our results in Lemmas \ref{lemma:M1M2} and \ref{lemm:Span}, we conjecture the same non-asymptotic sample complexity to hold for DoMA initialization. In fact, one could also use OPG (the method by \cite{xia2002adaptive}) in place of the first step of Algorithm \ref{algo:Initialization} to construct an alternative to $\hat{\bm M}$. OPG relies on local linear approximations and is asymptotically guaranteed to succeed with $\mathcal{O}(d)$ samples.  
This means our framework, compared to existing methods, is the first to scale efficiently to large datasets (since the compute cost is linear in the sample size $n$). Also, the cubic dependence on $d$ can be reduced to a quadratic by implementing the PCA step of Algorithm \ref{algo:Initialization} in a randomized fashion \cite{rokhlin2010randomized}. Furthermore, our runtime scales exponentially in the fixed parameter $k \ll \min(n,d)$. Along the same lines, training ReLU neural networks, which is also piecewise linear, exhibits a similar exponential dependence on the network size and is NP-hard \cite{froese2023training}.
However, this is quite reasonable when we observe that $k$ is the number of linear segments in each max-affine function in DoMA and is not the number of linear segments in the overall DoMA model. Since each linear region of the first max-affine function can partition every linear region of the second max-affine function, the overall DoMA function has $k_{\mathrm{DoMA}} \propto k\times k = k^2$ linear segments. This means a choice of model order leads to a quadratically larger number of segments in the overall DoMA model.

\section{Numerical Results} \label{Sec:Numerical}
This section discusses the empirical performance of ABGD.
First, we show the empirical success of Algorithm \ref{algo:Initialization} in initializing ABGD. Figure \ref{fig:MCV} shows the median of the normalized mean squared error (MSE) across $50$ Monte Carlo iterations for different values of $(n,d,k)$ with initialization only (solid) and after ABGD (dashed). It is clear that when the number of random samples by the initialization algorithm, $T$, exceeds a certain threshold, the test MSE becomes negligible, indicating successful initialization and exact recovery of the model parameters by ABGD. 
\begin{figure}
    \centering
    \includegraphics[width=0.5\linewidth]{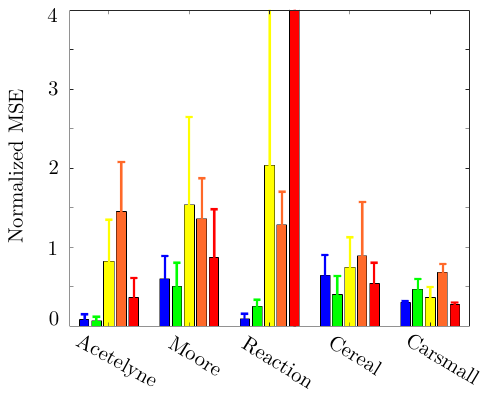}
    \caption{ Performance comparison of ABGD (blue), nonparametric difference of convex (green), $k$-nearest neighbors regression (yellow), multivariate adaptive regression splines (orange), and ReLU DNN (red) showing normalized MSE and $95\%$ confidence intervals.}
    \label{fig:synthetic2}
\end{figure}
Furthermore, we corroborate the theoretical guarantees in Theorem \ref{Theo:main} with simulations.  
Notice that adding any constant vector $\bm v \in \mathbb{R}^{d+1}$ to all model parameters, i.e. $\{\bm \beta_j+\bm v\}_{j=1}^{k_1}$ and $\{\bm \alpha_l+\bm v\}_{l=1}^{k_2}$, would result in an equivalent DoMA model in \eqref{eq:def_model_xi}. 
Furthermore, any permutation of the parameters of each max-affine model would also lead to an equivalent PL model. Therefore, to measure the performance of the parameter estimation by Algorithm \ref{algo:AlternatingGD}, we first solve for the optimal shift and permutation ambiguities as
\begin{align}\label{eq:ambg_shift}
   \mathrm{min} \Bigg\{  &\sum_{j=1}^{k_1}\lVert \hat{\bm \beta}_{\pi_1(j)} -\bm \beta^\star_j +\bm v\rVert_2^2 +  \sum_{l=1}^{k_2}\lVert \hat{\bm \alpha}_l -\bm \alpha_{\pi_2(l)}^\star+\bm v\rVert_2^2: \bm v \in \mathbb{R}^{d+1}, \pi_1 \in \mathfrak{S}_{k_1}, \pi_2 \in \mathfrak{S}_{k_2} \Bigg\},
\end{align}
where $\mathfrak{S}_k$ denotes the set of all permutations of $[k]$.
Then the relative error, $E(\hat{\bm{\beta}}, \hat{\bm \alpha})$, is defined as the minimum of \eqref{eq:ambg_shift} normalized by $\|[\bm \beta^\star; \bm \alpha^\star]\|_2^2$. 
Figure~\ref{fig:synthetic1} illustrates the median over $50$ Monte Carlo iterations of the relative error $E(\hat{\bm{\beta}}, \hat{\bm \alpha})$ when the covariate dimension $d$ varies versus the number of samples $n$ for $k_1=k_2=3$. 
The covariates and noise samples were generated following $\mathrm{Normal}(\bm 0, \bm I_d)$ and $\mathrm{Normal}(0, 10^{-2})$ respectively. %Gaussian distribution with $\sigma_z = 0.1$. 
Figure~\ref{fig:synthetic1} shows a sharp phase transition when the sample complexity $n$ crosses a threshold scaling linearly with ambient dimension $d$. This is consistent with the sample complexity guarantee provided by Theorem \ref{Theo:main}. This Monte Carlo experiment ran for $72$ hours on a server computer equipped with an Intel(R) Xeon(R) Gold $6330$ CPU.

Furthermore, we benchmark ABGD against relevant methods on the datasets available in MATLAB's Statistics and Machine Learning Toolbox. 
The benchmark datasets are of small sizes ($n \leq 10^3$ and $d<10^2$) so that the comparison can include nonparametric methods that do not scale well to large dimensions. 
%Since several methods we benchmark against are nonparametric, thus suffering from untractable requirements on the sample size and computation in high dimensions, we choose datasets that best fit their regime of operation ($n \leq 10^3$ and $d<10^2$).
The methods we compare are: (i) ABGD on the DoMA model with $k_1 =k_2 \leq 4$, (ii) non-parametric difference of convex (NDC) regression \citep{siahkamari2020piecewise}, (iii) $k$-nearest neighbors with $k\leq 10$, (iv) Multivariate Adaptive Regression Splines \citep{jekabsons2011areslab}, and (v) $L$-layer ReLU DNN with $L\leq 4$ and $4$ neurons per layer. The choice of layers for the ReLU DNN was guided by the previously discussed equivalence with DoMA functions ($L \leq \lceil \log_2\max(k_1,k_2)\rceil +2 = 4$). A random train-test split of ($80\%,20\%$) was used, and the hyperparameters of each method were tuned via $5$-fold cross-validation. We repeat this experiment $5$ times and report the mean test error and the 
$95\%$ confidence intervals in Figure \ref{fig:synthetic2}. It is observed that ABGD performs competitively with NDC and outperforms existing methods. 
%It is also worth mentioning that in the high-dimensional setting, such as Figure \ref{fig:synthetic} (left), NDC is computationally intractable and requires a sample size scaling exponentially with the ambient dimension \citep[Theorem 3]{siahkamari2020piecewise}. 
However, ABGD is much cheaper to solve than NDC, even with faster, more recent implementations \citep{siahkamari2022faster}. 
The implementation of ABGD and all competing methods is available in our GitHub repository \footnote{\href{https://github.com/haitham-kanj/DoMA-Regression}{https://github.com/haitham-kanj/DoMA-Regression}}.
%For such nonparametric methods, only the computational requirement can be improved using more involved training algorithms \citep{siahkamari2022faster}.
%%%%%%%%%%%%%%%%%%%%%%%%%%%%%%%%%%%%%%%%%%%%%%%%%%%%%%%%%%%%%%%%%%%%%%%%%%%%%%%%%

\section{Tropical Consequences} \label{Sec:Tropical}
This section discusses geometric consequences for PL regression from tropical algebra.
It has been shown that a DoMA function can be represented as an $L$-layer ReLU DNN where $L \leq \lceil\log_2\max(k_1,k_2)\rceil +2$ \citep[Theorem 5.4]{zhang2012sparse}. This result is based on the observation that a DoMA function is a natural object from max-plus algebra called the \textit{tropical rational function}.
Furthermore, recall that a neuron in a ReLU DNN is called \textit{inactive} if its weights do not have any influence on the output. 
We observe a similar interesting property for max-affine functions. 
Let $\mathcal{N}_{\bm \beta} = \mathrm{conv}(\{\bm \beta_j\}_{j=1}^k)$ be the \textit{Newton polytope} from tropical algebra whose vertices are model parameters. Then a property from max-plus algebra states that two max-affine functions are equivalent if they share the same Newton polytope \citep[Eq.~15]{maragos2021tropical}. Therefore, the max-affine structure can artificially deactivate any affine function by simply placing it within the convex hull of other affine functions. 
For example, in a simple max-affine function given by $\max(-x,x,x/2) = \max(-x,x)$, the linear function $x/2$ is inactive. This property implies that DoMA models are equivalent if their max-affine parts share the same Newton polytopes. With $\ominus$ denoting the Minkowski set difference, Figure \ref{fig:kchioce} provides geometric examples of this property where all model parameters interior to the normal cones (green) are inactive. 
\begin{figure}
    \centering
    \includegraphics[width =0.5\linewidth]{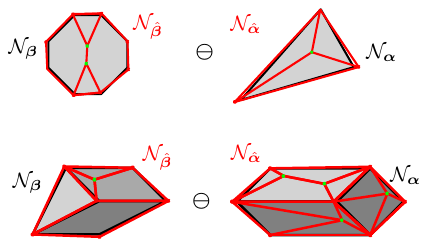}
    \caption{Consequence of overspecified orders in the DoMA regression: Newton polytope estimates (red) and groundtruth (black) with inactive parameters (green) for the 2D (top) and 3D (bottom) cases.}
    \label{fig:kchioce}
\end{figure}
This implies a major consequence for ABGD post-training, that is: any model parameter that can be written as a convex combination of other model parameters can be dropped; effectively, \textit{lossless compression} of the estimated DoMA model. This geometric result does not translate to ReLU DNNs since Newton polytopes are only defined through the difference of max-affine parametrization.

\section{Conclusion}
This paper presents ABGD, the first piecewise linear regression algorithm with the near minimax optimal sample complexity of $\tilde{\mathcal{O}}(d)$. Our method also inherits the cost efficiency of gradient-based methods. An initialization algorithm is presented and is shown to successfully initialize ABGD. We test the performance of our approach on synthetic data, corroborating the theoretical rate, and showing competitive performance with existing SoTA methods for nonlinear regression on real-world data. Furthermore, even empirically competitive methods suffer from pessimistic theoretical guarantees imposed by the nature of their analysis. Finally, a geometric interpretation of DoMA models post ABGD training is provided through tropical algebra.

\section*{Funding}
This work was supported in part by an NSF grant (Award No. CCF 19-43201).

\bibliography{ref}
\bibliographystyle{plainnat}
%USE THE BELOW OPTIONS IN CASE YOU NEED AUTHOR YEAR FORMAT.
%\bibliographystyle{abbrvnat}
%\bibliography{reference}

\appendix

\section{Technical Appendices and Supplementary Material} \label{Appendix}
\subsection{Proof of Theorem \ref{Theo:main} } \label{append:proof:theo}
Without loss of generality, we assume that $k=k_1=k_2$ to simplify the proof. 
The assertion in Theorem \ref{Theo:main} follows if 
\begin{equation}\label{eq:rec_beta}
\|\bm \beta^{t+1} - \bm \beta^\star\| \leq \tau_1 \|\bm \beta^{t} - \bm \beta^\star\| + \tau_2\|\bm \alpha^{t} - \bm \alpha^\star\|    
\end{equation}
and 
\begin{equation} \label{eq:rec_alpha}
\|\bm \alpha^{t+1} - \bm \alpha^\star\| \leq \tau
_1\|\bm \alpha^{t} - \bm \alpha^\star\| + \tau_2\|\bm \beta^{t} - \bm \beta^\star\|
\end{equation}
hold for some $\tau_1,\tau_2 \in (0,1/2)$. For notational simplicity,  we make use of the following shorthands
\begin{equation}
\label{eq:Notation}
\begin{aligned}
    &\mathcal{B}_j^t = \mathcal{C}_j(\bm \beta^t), ~ \mathcal{B}_j^\star = \mathcal{C}_j(\bm \beta^\star), \mathcal{A}_l^t = \mathcal{C}_l(\bm \alpha^t), ~  \mathcal{A}_l^\star = \mathcal{C}_l(\bm \alpha^\star), \\
    &\bm h_j^t = \bm \beta_j^t - \bm \beta_j^\star, ~ 
    \bm v^t_{jj'} = \bm \beta^t_j -\bm \beta^t_{j'}, ~  \bm v^\star_{jj'} = \bm \beta^\star_j -\bm \beta^\star_{j'}, \\
    & \bm q_l^t = \bm \alpha_l^t - \bm \alpha_l^\star, ~\bm w^t_{ll'} = \bm \alpha^t_l -\bm \alpha^t_{l'}, ~  \bm w^\star_{ll'} = \bm \alpha^\star_l -\bm \alpha^\star_{l'} 
\end{aligned}
\end{equation}
for all $j,j',l,l' \in [k]$ and $t \in \{0\}\cup \mathbb N$. 
Due to the symmetry of \eqref{eq:rec_beta} and \eqref{eq:rec_alpha}, if suffices to prove \eqref{eq:rec_beta}. The left-hand side of \eqref{eq:rec_beta} is upper bounded as 
\begin{equation} \label{eq:first_induction}
    \|\bm \beta^{t+1}- \bm \beta^\star \|_2^2 
    \leq \sum_{j=1}^k\| \bm \beta_j^t-\bm \beta_j^\star -\nabla_{\bm \beta_j}\ell(\bm \beta_j^t)\|_2^2 
    = \sum_{j=1}^k\| \bm h_j^t -\mu_j(\bm \beta^t)\nabla_{\bm \beta_j}\ell(\bm \beta_j^t)\|_2^2.
\end{equation}
The gradient in \eqref{eq:first_induction} can be expanded as
\begin{align}\label{eq:gradient_expansion}
    \nabla_{\bm \beta_j}\ell(\bm \beta_j^t) =&\frac{1}{n}\sum_{i=1}^n\mathbf{1}_{\{\bm x_i\in \mathcal{B}^t_j\}}\Big(\langle\bm \xi_i,\bm \beta_j^t\rangle -\max_{q\in [k]}\langle\bm \xi_i, \bm \beta^\star_q\rangle - \max_{l\in [k]}\langle\bm \xi_i, \bm \alpha^t_l\rangle + \max_{p\in [k]}\langle\bm \xi_i, \bm \alpha^\star_p\rangle \Big) \bm \xi_i \nonumber \\
    =& \underbrace{\frac{1}{n}\sum_{i=1}^n\mathbf{1}_{\{\bm x_i\in \mathcal{B}^t_j\}}\left(\langle\bm \xi_i,\bm \beta_j^t\rangle -\max_{q\in [k]}\langle\bm \xi_i, \bm \beta^\star_q\rangle \right) \bm \xi_i}_{\bm a_j^t} \nonumber \\
    &+ \underbrace{\frac{1}{n}\sum_{i=1}^n\mathbf{1}_{\{\bm x_i\in \mathcal{B}^t_j\}}\left(- \max_{l\in [k]}\langle\bm \xi_i, \bm \alpha^t_l\rangle + \max_{p\in [k]}\langle\bm \xi_i, \bm \alpha^\star_p\rangle \right) \bm \xi_i}_{\bm b_j^t}. 
\end{align}
Plugging \eqref{eq:gradient_expansion} into \eqref{eq:first_induction}, we obtain
\begin{align} \label{eq:Induction_second}
    \|\bm \beta^{t+1}- \bm \beta^\star \|_2^2 &\leq \sum_{j=1}^k \left(\|\bm h_j^t - \mu_j(\bm \beta^t)\bm a_j^t \|_2+\|\mu_j^t \bm b_j^t \|_2\right)^2 \nonumber\\
    &\leq 2 \sum_{j=1}^k \|\bm h_j^t - \mu_j(\bm \beta^t)\bm a_j^t \|^2_2+2\sum_{j=1}^k\|\mu_j(\bm \beta^t) \bm b_j^t \|_2^2 \nonumber \\
    & \leq \tau_1 \| \bm \beta^t -\bm \beta^\star\| +2\sum_{j=1}^k\|\mu_j(\bm \beta^t) \bm b_j^t \|_2^2,
\end{align}
where the last inequality follows from \citep[Theorem 2.3]{kanj2024sparse} since it is equivalent to a single gradient step in max-affine regression. Therefore, in order for \eqref{eq:rec_beta} to follow from \eqref{eq:Induction_second}, it suffices to show that 
\begin{equation}\label{eq:Induction_alpha_req}
    \sum_{j=1}^k\|\mu_j(\bm \beta^t) \bm b_j^t \|_2^2 \leq \tau_2\|\bm q_l^t \|_2^2
\end{equation}
for some $\tau_2 \in (0,1/2)$.
Next, we can rewrite $\bm b_j^t $ as 
\begin{align}\label{eq:bjt_decomposition}
    \bm b_j^t =& \frac{1}{n}\sum_{i=1}^n\sum_{l,p}\mathbf{1}_{\{\bm x_i\in \mathcal{B}_j^t \cap \mathcal{A}_l^t \cap \mathcal{A}_p^\star\}}\langle\bm \xi_i, \bm \alpha^\star_p - \bm \alpha^t_l\rangle  \bm \xi_i \nonumber\\
    =&\frac{1}{n}\sum_{i=1}^n\sum_{l,p}\mathbf{1}_{\{\bm x_i\in \mathcal{B}_j^t \cap \mathcal{A}_l^t \cap \mathcal{A}_p^\star\}}\langle\bm \xi_i, \bm w^\star_{pl} - \bm q^t_l\rangle  \bm \xi_i\nonumber \\
     =& \frac{1}{n}\sum_{i=1}^n\sum_{\substack{l,p \\ l \neq p}}\mathbf{1}_{\{\bm x_i\in \mathcal{B}_j^t \cap \mathcal{A}_l^t \cap \mathcal{A}_p^\star\}}\langle\bm \xi_i, \bm w^\star_{pl} \rangle  \bm \xi_i 
    + \frac{1}{n}\sum_{i=1}^n\sum_{l=1}^k\mathbf{1}_{\{\bm x_i\in \mathcal{B}_j^t \cap \mathcal{A}_l^t \}}\langle\bm \xi_i,  - \bm q^t_l\rangle  \bm \xi_i \nonumber \\
    =& \underbrace{\frac{1}{n}\sum_{i=1}^n\sum_{l=1}^k\sum_{\substack{p=1\\p\neq l}}^k\mathbf{1}_{\{\bm x_i\in \mathcal{B}_j^t \cap \mathcal{A}_l^t \cap \mathcal{A}_p^\star\}}\langle\bm \xi_i, \bm w^\star_{pl} \rangle  \bm \xi_i}_{\mathcal{T}^j_1
    } 
    + \underbrace{\frac{1}{n}\sum_{i=1}^n\sum_{l=1}^k \sum_{\substack{j'=1\\j'\neq j}}^k\mathbf{1}_{\{\bm x_i\in \mathcal{B}_j^t \cap\mathcal{B}_{j'}^\star\cap \mathcal{A}_l^t \}}\langle\bm \xi_i,  - \bm q^t_l\rangle\bm \xi_i}_{\mathcal{T}^j_2} \nonumber \\
    & + \underbrace{\frac{1}{n}\sum_{i=1}^n\sum_{l=1}^k\mathbf{1}_{\{\bm x_i\in \mathcal{B}_j^t \cap \mathcal{B}_j^\star\cap \mathcal{A}_l^t \cap \mathcal{A}_l^\star \}}\langle\bm \xi_i,  - \bm q^t_l\rangle  \bm \xi_i}_{\mathcal{T}^j_3} 
    + \underbrace{\frac{1}{n}\sum_{i=1}^n\sum_{l=1}^k\sum_{\substack{p=1\\p\neq l}}^k\mathbf{1}_{\{\bm x_i\in \mathcal{B}_j^t \cap \mathcal{B}_j^\star\cap \mathcal{A}_l^t \cap {\mathcal{A}_p^\star} \}}\langle\bm \xi_i,  - \bm q^t_l\rangle  \bm \xi_i}_{\mathcal{T}^j_4}.
\end{align}

The summands $\mathcal{T}^j_1$, $\mathcal{T}^j_2$, and $\mathcal{T}^j_4$ in the upper bound in \eqref{eq:bjt_decomposition} are uniformly upper-bounded with high probability as shown in the following lemma, the proof of which is deferred to Appendix \ref{append:proof:tau}.
\begin{lemma}\label{lemm:Tau_all}
    Instate of the assumptions of Theorem \ref{Theo:main}. It holds with probability $1-ke^{-d}$ that
    \begin{equation*} 
       \max(  \|\mathcal{T}^j_1\|, \|\mathcal{T}^j_2\|,\|\mathcal{T}^j_4\|)^2 \leq \frac{\pi_{\min}^{2(1+\zeta^{-1})}}{k}\sum_{l=1}^k\|\bm q_l^t\|^2.
    \end{equation*}
\end{lemma}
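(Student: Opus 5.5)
We bound each of $\mathcal{T}^j_1,\mathcal{T}^j_2,\mathcal{T}^j_4$ by viewing it as $\frac1n\sum_i \mathbf{1}_{\{\bm x_i\in S\}}\langle\bm\xi_i,\bm u\rangle\bm\xi_i$, where $S$ is a \emph{small-probability set} determined by the mismatch between the current and ground-truth partitions. The common template is as follows. First, by Cauchy--Schwarz over samples,
\[
\Big\|\frac1n\sum_i \mathbf{1}_{\{\bm x_i\in S\}}\langle\bm\xi_i,\bm u\rangle\bm\xi_i\Big\|\le \sup_{\bm v\in\mathbb{S}^d}\Big(\frac1n\sum_i\mathbf{1}_{S}(\bm x_i)\langle\bm\xi_i,\bm u\rangle^2\Big)^{1/2}\Big(\frac1n\sum_i\mathbf{1}_{S}(\bm x_i)\langle\bm\xi_i,\bm v\rangle^2\Big)^{1/2}.
\]
Second, we control each factor uniformly over $S$ in a VC-type class (intersections of $O(k^2)$ halfspaces in $\mathbb{R}^{d+1}$) and over directions. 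For this we use the sub-Gaussian tail of Assumption~\ref{assum:subg} together with a union bound, an $\epsilon$-net over $\mathbb{S}^d$, and a Sauer--Shelah count of realizable index sets. The resulting bound is $\frac1n\sum_i\mathbf{1}_S\langle\bm\xi_i,\bm v\rangle^2\lesssim \eta^2\,\hat p_S\log(e/\hat p_S)$ plus a deviation term of order $\eta^2 d\,k^2\log(n/d)/n$. Third, we make the empirical mass $\hat p_S$ small. By~\eqref{eq:cond:lem:lwb_gradient}, the deviation term is at most $\pi_{\min}^{4(1+\zeta^{-1})}/k^2$, which is far below the target. The probability loss $ke^{-d}$ comes from a union over $j\in[k]$.

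For $\mathcal{T}^j_2$ and $\mathcal{T}^j_4$ the sets are $\mathcal{B}^t_j\cap\mathcal{B}^\star_{j'}$ with $j'\ne j$, respectively $\mathcal{A}^t_l\cap\mathcal{A}^\star_p$ with $p\neq l$, further intersected with the other cells. These are mis-assignment regions. A point $\bm x$ lies in $\mathcal{B}^t_j\cap\mathcal{B}^\star_{j'}$ only if $\langle\bm\xi,\bm v^\star_{j'j}\rangle\in(0,\langle\bm\xi,\bm h^t_{j'}-\bm h^t_j\rangle]$. Hence $|\langle\bm\xi,\bm v^\star_{j'j}\rangle|\le|\langle\bm\xi,\bm h^t_{j'}-\bm h^t_j\rangle|$, and the first coordinates of $\bm v^\star$ have norm at least $\kappa$ by~\eqref{eq:defkappa}. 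Splitting on the event $|\langle\bm\xi,\bm h^t_{j'}-\bm h^t_j\rangle|\le s$, Assumption~\ref{assum:anti} gives probability at most $(\gamma s^2/\kappa^2)^\zeta$ for the slab. The complement has sub-Gaussian tail $\exp(-s^2/(C\eta^2\|\bm h\|^2))$. Choosing $s\asymp \eta\|\bm h\|\sqrt{\log(1/\rho)}$ and using $\|\bm h\|\le 2\kappa\rho$ from~\eqref{eq:nbr_theorem} makes the mis-assignment probability polynomially small in $\rho$. The choice of $\rho$ in~\eqref{eq:choice_rho1}, with its $\pi_{\min}^{3/4}k^{-2}\log^{-1/2}$ form, is presumably calibrated exactly for this step. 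Then $\|\mathcal{T}^j_2\|\le \sum_{l}(\ldots)\|\bm q^t_l\|$, and Cauchy--Schwarz over $l$ yields $\frac{c}{k}\sum_l\|\bm q^t_l\|^2$ times a small factor, which we absorb into $\pi_{\min}^{2(1+\zeta^{-1})}$. Since $\pi_{\min}$ is a constant factor here and $\rho$ is tiny, this is consistent with the target bound.

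The main obstacle is $\mathcal{T}^j_1$, because its summands involve $\bm w^\star_{pl}$ rather than the error $\bm q^t$, so linearity in $\bm q$ must be recovered. On $\mathcal{A}^t_l\cap\mathcal{A}^\star_p$ with $p\ne l$ we have $0<\langle\bm\xi,\bm w^\star_{pl}\rangle\le\langle\bm\xi,\bm q^t_l-\bm q^t_p\rangle$. So each summand is pointwise dominated by $|\langle\bm\xi,\bm q^t_l-\bm q^t_p\rangle|$, and the quadratic form in the Cauchy--Schwarz bound can be taken with $\bm u=\bm q^t_l-\bm q^t_p$ on the same mis-assignment set. After this domination, $\mathcal{T}^j_1$ reduces to the previous case with the set $\mathcal{A}^t_l\cap\mathcal{A}^\star_p$, giving $\|\mathcal{T}^j_1\|^2\lesssim k\,\epsilon(\rho)\sum_{l}\|\bm q^t_l\|^2$. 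The delicate part is that the empirical mass of these slabs must be uniformly small, even though the sets depend on the data-dependent iterate $\bm\alpha^t$. We handle this through the VC-uniform empirical process bound over all $(\bm\alpha,\bm\beta)$ in the $\kappa\rho$-ball, taking that supremum once, at the cost of the $k^4$ and $d$ factors in~\eqref{eq:cond:lem:lwb_gradient}. Collecting the three bounds and the union bound over $j$ gives the lemma.
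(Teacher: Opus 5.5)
Your skeleton matches the paper's. Each of $\mathcal{T}_1^j,\mathcal{T}_2^j,\mathcal{T}_4^j$ lives on a mismatch cell ($\mathcal{B}_j^t\cap\mathcal{B}_{j'}^\star$ with $j'\neq j$, or $\mathcal{A}_l^t\cap\mathcal{A}_p^\star$ with $p\neq l$). Its norm is split into a restricted spectral norm times a restricted quadratic form, and in $\mathcal{T}_1^j$ the factor $\langle\bm\xi,\bm w^\star_{pl}\rangle$ is traded for the error. The individual tools are sound:
\begin{itemize}
\item Your pointwise domination $0<\langle\bm\xi,\bm w^\star_{pl}\rangle<\langle\bm\xi,\bm q_l^t-\bm q_p^t\rangle$ on $\mathcal{A}_l^t\cap\mathcal{A}_p^\star$ is correct. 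It is a more elementary substitute for the paper's appeal to Lemma SM2.7 of Kim et al.
\item Your Cauchy--Schwarz over samples is the right way to use disjointness of the indicator sets. The summands are disjointly supported in sample space, not pairwise orthogonal in $\mathbb{R}^{d+1}$ as the paper says.
\item Your $\hat p\log(e/\hat p)$ spectral bound is Lemma~\ref{lemm:improved_alpha}. It is sharper than the $\sqrt{\alpha}$-type bound of Tan and Vershynin that the paper invokes here.
\item One small slip: on $\mathcal{B}_j^t\cap\mathcal{B}_{j'}^\star$ the upper endpoint is $\langle\bm\xi,\bm h_j^t-\bm h_{j'}^t\rangle$, not $\langle\bm\xi,\bm h_{j'}^t-\bm h_j^t\rangle$. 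This is harmless once you take absolute values.
\end{itemize}

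The genuine gap is the quantitative step, which is the whole content of the lemma. You must produce exactly the coefficient $\pi_{\min}^{2(1+\zeta^{-1})}/k$, and you dismiss this because ``$\pi_{\min}$ is a constant factor''. It is not: $\pi_{\min}\le 1/k$, and the bound is stated explicitly in $\pi_{\min}$. Moreover $\rho$ is fixed by \eqref{eq:choice_rho1}, so you cannot shrink it.

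If you actually run your slab argument at that $\rho$, the per-pair mismatch mass is of order $(\gamma\eta^2\rho^2\log(1/\rho))^{\zeta}$. Up to factors depending on $\gamma,\eta$, this is $(R^2\pi_{\min}^{3/2}k^{-4})^{\zeta}$. Your coefficient is therefore at best of order $\pi_{\min}^{3\zeta}$ times powers of $k$, while the lemma needs $\pi_{\min}^{2+2\zeta^{-1}}$. Anti-concentration with isotropic $\bm x$ forces $\zeta\le 1/2$, so $3\zeta<2+2\zeta^{-1}$ always and the absorption fails as $\pi_{\min}\to 0$. Concretely, take Gaussian covariates in the well-balanced case ($\zeta=1/2$, $\pi_{\min}\asymp 1/k$). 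Your route gives roughly $R^2k^{-11/2}$ up to logarithms, against the required $k^{-7}$, and $R$ is an absolute constant.

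The paper does not get this scale from a slab estimate. It imports two bounds from Kim et al.\ (Lemma SM2.7):
\begin{itemize}
\item the empirical-mass bound $C(R^\zeta\pi_{\min}^{1+\zeta^{-1}}/k)^2$ for the mismatch cells;
\item the bound $\frac1n\sum_i\mathbf{1}_{\{\bm x_i\in\mathcal{A}_l^t\cap\mathcal{A}_p^\star\}}\langle\bm\xi_i,\bm w^\star_{pl}\rangle^2\le\frac{2}{5\gamma k}(\pi_{\min}/16)^{1+\zeta^{-1}}\|\bm q_l^t-\bm q_p^t\|^2$, which is the anti-concentration restricted-eigenvalue scale divided by $k$.
\end{itemize}
Only then does it apply the restricted spectral bound.

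To close your argument you need one of two things. Either supply an input of that strength, and check that its hypotheses really hold at this $\rho$, since a direct anti-concentration estimate there falls short. Or shrink the basin, which changes Theorem~\ref{Theo:main}. A smaller point: the $\log(n/d)$ in your deviation term is not absorbed by \eqref{eq:cond:lem:lwb_gradient}, which contains no logarithm.
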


The remaining summand $\mathcal{T}^j_3$ is rewritten as 
\[
\mathcal{T}^j_3 = \frac{1}{n}\sum_{l=1}^k\bm E_l^j {\bm E_l^j}^\mathsf{T} (-\bm q^t_l ) 
\] where \[
\bm E_l^j := \begin{bmatrix} \mathbf{1}_{\{\bm x_1\in \mathcal{B}_j^t \cap \mathcal{B}_j^\star\cap \mathcal{A}_l^t \cap \mathcal{A}_l^\star\}} \bm \xi_1, ~ \dots, ~ \mathbf{1}_{\{\bm x_n\in \mathcal{B}_j^t \cap \mathcal{B}_j^\star\cap \mathcal{A}_l^t \cap \mathcal{A}_l^\star \}} \bm \xi_n \end{bmatrix} \in \mathbb{R}^{(d+1)\times n}.
\]
Therefore, it follows that
\begin{align*} 
   \left \|\mathcal{T}^j_3\right\|^2_2 \leq \sum_{l=1}^k\left\|\frac{1}{\sqrt{n}}\bm E_l^j \right\|^4_2 \left\|\bm q_l^t  \right\|_2^2.
\end{align*}
For every fixed $l \in [k]$, it holds with probability at least $1-ke^{-d}$ that
\begin{align}\label{eq:res_rip}
    \left\|\frac{1}{\sqrt{n}}\bm E_l^j \right\|^4_2 
    &= \lambda^2_{\max} \left( \frac{1}{n}\bm E_l^j {\bm E_l^j}^\mathsf{T}\right) \nonumber \\&
    \leq C(\eta^4\vee 1) \Bigg[\Bigg(\underbrace{\mathbb{P}(\bm x \in \mathcal{B}_j^\star \cap \mathcal{A}_l^\star) \log\left(\frac{e}{\mathbb{P}(\bm x \in \mathcal{B}_j^\star \cap \mathcal{A}_l^\star)}\right)}_{\lambda_{jl}}\Bigg)^2
    +k^{-3}\pi_{\min}^{4(1+\zeta^{-1})} \Bigg]
\end{align}
since \eqref{eq:cond:lem:lwb_gradient} implies that the result in each of Lemmas \ref{lemm:indicator} and \ref{lemm:improved_alpha} holds with probability $1-ke^{-d}/2$. 
Since $\bm \beta^\star$ and $\bm \alpha^\star$ satisfy Definition \eqref{eq:Theta_class} where $\bm \beta, \bm \alpha$ substituted by $\bm \beta^\star, \bm \alpha^\star$, it follows that 
\begin{equation} \label{eq:worst_lambda}
\max_{l \in [k]} \lambda_{jl} \leq k^{-r} \mathbb{P}(\bm x \in \mathcal{B}_j^\star) \log\left(\frac{e k^{r}}{\mathbb{P}(\bm x \in \mathcal{B}_j^\star)}\right), \quad \forall j \in [k]. 
\end{equation}
The inequality in \eqref{eq:worst_lambda} is proved as follows: Fix $j \in [k]$. The left-hand side of \eqref{eq:worst_lambda} is written as the solution to the following optimization problem 
\begin{equation}
    \label{optimization}    \arraycolsep=5pt\def\arraystretch{1.5}
    \begin{array}{ll} \displaystyle
    \mathop{\rm maximize}_{\{p_l\}_{l=1}^k} & \displaystyle \max_{l \in [k]} p_l \log(e/p_l) \\
    \mathrm{subject~to} & 
    \displaystyle \sum_{l=1}^k p_l = \mathbb{P}(\bm x \in \mathcal{B}_j^\star) \\ 
    & 0 \leq p_l \leq k^{-r} \mathbb{P}(\bm x \in \mathcal{B}_j^\star), \quad l \in [k].
    \end{array}
\end{equation}
Since $x \mapsto x\log(e/x)$ is monotonically increasing when $x \in [0,1]$, the maximizer to \eqref{optimization} also maximizes the following optimization
\begin{equation}
    \label{eq:optimization2}    \arraycolsep=5pt\def\arraystretch{1.5}
    \begin{array}{ll} \displaystyle
    \mathop{\rm maximize}_{\{p_l\}_{l=1}^k} & \displaystyle\max_{l \in [k]} p_l \\
    \mathrm{subject~to} & 
    \displaystyle \sum_{l=1}^k p_l = \mathbb{P}(\bm x \in \mathcal{B}_j^\star) \\ 
    & 0 \leq p_l \leq k^{-r} \mathbb{P}(\bm x \in \mathcal{B}_j^\star), \quad l \in [k].
    \end{array}
\end{equation}
Since $r \in [0,1]$, the feasible set of \eqref{eq:optimization2} is non-empty and its maximizer should satisfy $p_l = k^{-r} \mathbb{P}(\bm x \in \mathcal{B}_j^\star)$ for some $l \in [k]$ thereby yielding the assertion in \eqref{eq:worst_lambda}.
Next, we show that the step size is upper-bounded with probability at least $1-ke^{-d}$ as
\begin{align}
\label{eq:mu_bound}
     \mu_j(\bm \beta^t) &:= \frac{1}{\frac{1}{n}\sum_{i=1}^{n}\mathbf{1}_{\{\bm x_i \in \mathcal{B}^t_j\}}} \leq \frac{1}{\mathbb P(\bm x\in \mathcal{B}^t_j) - k^{-3/2}\pi_{\min}^{2(1+\zeta^{-1})}} \nonumber\\
     &\leq  \frac{1}{(1-CR^{2\zeta}k^{-2(1+\zeta^{-1})})\mathbb P(\bm x\in \mathcal{B}^\star_j) - k^{-3/2}\pi_{\min}^{2(1+\zeta^{-1})}} \leq \frac{2}{\mathbb P(\bm x\in \mathcal{B}^\star_j)}
\end{align}
where the first inequality follows from \citep[Lemma A.1]{kanj2024sparse}, the second inequality follows from \citep[Lemma B.3]{kanj2024sparse}, and the last inequality follows from a suitable choice of $R>0$.
Using the bounds in \eqref{eq:res_rip} and \eqref{eq:mu_bound} yields that
\begin{align*} 
    \sum_{j=1}^k \|\mu_j(\bm \beta^t) \mathcal{T}^j_3 \|_2^2 \leq C(\eta^4\vee1) 
    \left(k^{1-2r}\log^2(e^{-1}k^{-r}\pi_{\min})+k^{-2}\pi_{\min}^{2+4\zeta^{-1}}\right)\sum_{l=1}^k \left\| \bm q_l^t \right\|_2^2. 
\end{align*}
Finally, we derive an upper bound on the left-hand side of \eqref{eq:Induction_alpha_req} given by
\begin{align}
    \label{eq:bjt_bound_final}
    \sum_{j=1}^k\|\mu_j(\bm \beta^t) \bm b_j^t \|^2 
    &\leq 4\sum_{j=1}^k \left(\sum_{p=1}^4\|\mu_j(\bm \beta^t) \mathcal{T}^j_p\|^2\right) \nonumber\\
    & \leq \underbrace{\left[48k^{-2\zeta^{-1}}+ C(\eta^4\vee1)\left(k^{1-2r}\log^2(e^{-1}k^{-r}\pi_{\min})+k^{-4(1+\zeta^{-1})}\right)\right]}_{\phi(k;r)} \, \sum_{l=1}^k\|\bm q_l^t\|^2,
\end{align}
where $\phi(k;r)$ is a nonnegative, monotonically decreasing function of $k$ when $r>1/2$. Therefore, for sufficiently large $k$, we can have $\phi(k;r)< 1/2$ satisfying the requirement in \eqref{eq:Induction_alpha_req} which concludes the proof.

\subsection{Supporting Lemmas for the Proof of Theorem \ref{Theo:main}}
The first lemma provides an upper bound on the empirical average of the indicator function when activated by membership in the intersection of tropical open cells defined in \eqref{eq:def_caljstar}.
\begin{lemma} \label{lemm:indicator}
    Instate of the assumptions of Theorem \ref{Theo:main}. For every $l, j \in [k]$, it holds with probability at least $1- ke^{-d}$ that
    \begin{equation*}
    \frac{1}{n}\sum_{i=1}^n \mathbf{1}_{\{\bm x_i\in  \mathcal{B}_j^\star \cap \mathcal{A}_l^\star\}} \leq \mathbb{P}\left(\bm x_i\in  \mathcal{B}_j^\star \cap \mathcal{A}_l^\star\right) + k^{-3/2}\pi_{\min}^{2(1+\zeta^{-1})}.
\end{equation*}
\end{lemma}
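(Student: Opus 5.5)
The cells $\mathcal{B}_j^\star$ and $\mathcal{A}_l^\star$ are fixed, since they depend only on the ground truth. So for fixed $(j,l)$ the indicators $Z_i := \mathbf{1}_{\{\bm x_i\in \mathcal{B}_j^\star\cap\mathcal{A}_l^\star\}}$ are i.i.d. Bernoulli variables with mean $p_{jl} := \mathbb{P}(\bm x\in \mathcal{B}_j^\star\cap\mathcal{A}_l^\star)$. The claim is then a one-sided concentration bound for a binomial average, with deviation $s := k^{-3/2}\pi_{\min}^{2(1+\zeta^{-1})}$.

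The plan is to apply Hoeffding's inequality, which gives
\[
\mathbb{P}\Big(\frac{1}{n}\sum_{i=1}^n Z_i - p_{jl} > s\Big) \leq \exp(-2ns^2) = \exp\big(-2n k^{-3}\pi_{\min}^{4(1+\zeta^{-1})}\big).
\]
Next I invoke the sample size condition \eqref{eq:cond:lem:lwb_gradient}, namely $n \geq C_1 d\, k^4 \pi_{\min}^{-4(1+\zeta^{-1})}$ (the factor $\sigma_z^2\vee 1\vee\eta^4$ is at least $1$ and can be dropped). This gives $2ns^2 \geq 2C_1 d k \geq d + \log k + \log 2$ once $C_1$ is a modest absolute constant. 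The failure probability for a fixed pair is therefore at most $e^{-d}/(2k)$, and in particular at most $ke^{-d}$ as required.

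If the statement is meant simultaneously for all $(j,l)\in[k]^2$, as its use in \eqref{eq:res_rip} suggests, I take a union bound over the $k^2$ pairs. This costs a factor $k^2$, which the surplus factor $k$ in the exponent absorbs: $2C_1dk \geq d + 3\log k$. The total failure probability then stays below $ke^{-d}/2$, consistent with the way the lemma is invoked.

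I expect no real obstacle. The only point needing care is that the sets are deterministic, because they are defined through $\bm\beta^\star,\bm\alpha^\star$ and not through the iterates. If they depended on the iterates, a uniform VC-type argument over polyhedral cells would be required, as in \citep[Lemma A.1]{kanj2024sparse}. Since they do not, plain Hoeffding suffices, and the constants are routine.
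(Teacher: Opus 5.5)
Your argument is correct for the lemma as stated. With $(\bm\beta^\star,\bm\alpha^\star)$ fixed, the cell $\mathcal{B}_j^\star\cap\mathcal{A}_l^\star$ is deterministic. Hoeffding then bounds the failure probability by $\exp(-2ns^2)\le e^{-2C_1dk}$ with $s=k^{-3/2}\pi_{\min}^{2(1+\zeta^{-1})}$ under \eqref{eq:cond:lem:lwb_gradient}, and the spare factor $k$ in the exponent pays for the union bound over the $k^2$ pairs.

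The paper argues differently. It bounds the supremum of the deviation over all pairs of polyhedral cells in $\mathcal{P}_{k,d}$, using the Sauer--Shelah growth-function bound and a uniform concentration lemma of \citet{kim2024max}. This gives a deviation of order $\sqrt{(\log(1/\delta)+kd\log(n/d))/n}$, and the paper then sets $\delta=ke^{-d}$.

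Your route is more elementary, and it closes exactly under the stated sample size. The paper's uniform deviation contains a $kd\log(n/d)$ term, so matching it to $k^{-3/2}\pi_{\min}^{2(1+\zeta^{-1})}$ strictly requires an extra $\log(n/d)$ factor that \eqref{eq:cond:lem:lwb_gradient} does not have.

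What the paper's route buys is a single high-probability event that holds simultaneously for every choice of cells. Hence it holds for every ground truth in $\Theta(\kappa,\pi_{\min},r)$ at once. Theorem~\ref{Theo:main} needs exactly this if its phrase ``with probability at least $1-ke^{-d}$ for all $[\bm\beta^\star;\bm\alpha^\star]\in\Theta(\kappa,\pi_{\min},r)$'' is read as a uniform statement.

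So your remark that plain Hoeffding suffices because the sets do not depend on the iterates is accurate only when the ground truth is fixed before the sample is drawn. For the uniform-in-$\Theta$ reading you would still need a VC or covering argument.
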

\begin{proof}
    For every fixed $l, j \in [k]$, we have that 
\begin{equation*}
    \sum_{i=1}^n \mathbf{1}_{\{\bm x_i\in \mathcal{B}_j^t \cap \mathcal{B}_j^\star\cap \mathcal{A}_l^t \cap \mathcal{A}_l^\star\}} \leq \sum_{i=1}^n \mathbf{1}_{\{\bm x_i\in  \mathcal{B}_j^\star \cap \mathcal{A}_l^\star\}}.
\end{equation*}
Let $\mathcal{D}$ be a collection of subsets in $\mathbb R^d$.  We denote the set of vectors whose entries are the indicator functions of $\mathcal{D}$ evaluated at samples $\{\bm x_i\}_{i=1}^n \subset \mathbb{R}^d$ by 
\begin{equation*}
    \mathcal{H} (\mathcal{C}, \{\bm x_i\}_{i=1}^n) := \left\{\left(\mathbf{1}_{\{\bm x_1\in C\}},\ldots,\mathbf{1}_{\{\bm x_n\in C \}}\right): C\in \mathcal{D}\right\}.
\end{equation*}
The Sauer-Shelah lemma \citep[Section 3]{mohri2018foundations} implies 
\begin{equation*}
 \sup\limits_{\bm x_1,\ldots,\bm x_n}\left|\mathcal{H} (\mathcal{D}, \{\bm x_i\}_{i=1}^n)\right| \leq \left( \frac{en}{\vcdim(\mathcal{D})}\right)^{\vcdim(\mathcal{D})},
\end{equation*}
where $\vcdim(\mathcal{D})$ denotes the Vapnik-Chervonenkis dimension of $\mathcal{D}$.
Let $\mathcal{P}_{k,d}$ collect the intersections of $k$ halfspaces in $\mathbb R^d$. 
Since the VC-dim of a single halfspace in $\mathbb R^d$ is $d+1$ \citep[Theorem A]{csikos2018optimal}, we have
\begin{equation*}
    \sup_{\bm x_1,\dots,\bm x_n}\left| \mathcal{H} (\mathcal{P}_{k,d} , \{\bm x_i\}_{i=1}^n) \right| \leq \left( \frac{en}{d+1}\right)^{k(d+1)}.
\end{equation*}
We also have by \citep[Theorem A]{csikos2018optimal} that $\vcdim(\mathcal{D}_1\cap \mathcal{D}_2) \leq C\vcdim(\mathcal{D})$ for every $\mathcal{D}_1, \mathcal{D}_2\subseteq \mathcal{D}$.  
Therefore, using \citep[Lemma SM1.3]{kim2024max}, we have with probability at least $1-\delta$ that 
\begin{align*}
    \sup_{\mathcal{B}_j^\star, \mathcal{A}_l^\star \in \mathcal{P}_{k,d}} \left| \frac{1}{n}\sum_{i=1}^n \mathbf{1}_{\{\bm x_i\in  \mathcal{B}_j^\star \cap \mathcal{A}_l^\star\}} -  \mathbb{P}\left(\bm x_i\in  \mathcal{B}_j^\star \cap \mathcal{A}_l^\star\right) \right| 
    \leq C\sqrt{\frac{\log(1/\delta)+ kd\log(d/n)}{n}}.
\end{align*}
We proceed with the choice of $\delta = ke^{-d}$.
Then, the sample complexity requirement in \eqref{eq:cond:lem:lwb_gradient} yields the assertion in \eqref{lemm:indicator}.
\end{proof}

The next lemma establishes an upper bound on the spectral norm of the empirical outer product when only a subset of summands are active.
\begin{lemma} \label{lemm:improved_alpha}
    Let $\delta \in (0,1/e)$, $\alpha \in (0,1)$, $\{\bm x_i\}_{i=1}^n$ be independent copies of $\eta$-sub-Gaussian vector $\bm x \in \mathbb{R}^d$. Then it holds with probability at least $1-\delta$ that
    \begin{align} \label{eq:improved_lemm_alpha}
    \sup_{\mathcal{I}:|\mathcal{I}|\leq \alpha n} \lambda_{\max}\bigg(\frac{1}{n}\sum_{i\in \mathcal{I}}&[\bm x_i;1][\bm x_i;1 ]^\mathsf{T}\bigg)
        \leq C (\eta^2 \vee 1) \alpha\log(e/\alpha),
    \end{align}
    provided
    \begin{equation} \label{eq:lemma_new_n}
        n\geq d \vee \frac{\log(1/\delta)}{\alpha}.
    \end{equation}
\end{lemma}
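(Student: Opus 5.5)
The plan is to reduce the supremum over index sets to a supremum over unit vectors, and then combine a net argument with a tail bound on order statistics. Write $\bm \xi_i = [\bm x_i;1]$. For any $\mathcal{I}$ with $|\mathcal{I}|\le \alpha n$,
\[
\lambda_{\max}\Big(\frac1n\sum_{i\in\mathcal I}\bm\xi_i\bm\xi_i^\mathsf{T}\Big)=\sup_{\bm u\in\mathbb S^{d}}\frac1n\sum_{i\in\mathcal I}\langle \bm u,\bm\xi_i\rangle^2 .
\]
For each $\bm u$ we take the worst index set. The quantity then becomes $\sup_{\bm u}\frac1n\sum_{i=1}^{m}\big(\langle\bm u,\bm\xi\rangle^2\big)_{(i)}$ with $m=\lfloor\alpha n\rfloor$, where $(\cdot)_{(i)}$ denotes decreasing order statistics. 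Each $\langle \bm u,\bm \xi_i\rangle = \langle \bm u_{1:d},\bm x_i\rangle + u_{d+1}$ is sub-Gaussian with $\psi_2$-norm at most $C(\eta\vee 1)$. The task is therefore to bound the sum of the top $m$ out of $n$ squared sub-Gaussians, uniformly over the sphere.

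First, fix $\bm u$ and a fixed set $\mathcal I$ of size $m$. Bernstein's inequality for sub-exponential variables gives
\[
\sum_{i\in\mathcal I}\langle\bm u,\bm\xi_i\rangle^2\le C(\eta^2\vee1)\big(m+s\big)
\]
with probability at least $1-e^{-s}$. Next, take a union bound over all $\binom{n}{m}\le (en/m)^m=e^{m\log(e/\alpha)}$ subsets, and over a $1/4$-net $\mathcal N$ of $\mathbb S^{d}$ of size $9^{d+1}$. Set
\[
s = m\log(e/\alpha) + (d+1)\log 9+\log(1/\delta).
\]
Then with probability $1-\delta$, every net point and every admissible $\mathcal I$ satisfies
\[
\frac1n\sum_{i\in\mathcal I}\langle\bm u,\bm\xi_i\rangle^2\le C(\eta^2\vee1)\Big(\alpha\log(e/\alpha)+\frac{d+\log(1/\delta)}{n}\Big).
\]

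The final step passes from the net to the whole sphere. The map $\bm u\mapsto \sup_{\mathcal I}\|\bm P_{\mathcal I}\bm\Xi^\mathsf{T}\bm u\|$ is a seminorm (a supremum of seminorms), so the standard argument $\sup_{\mathbb S}\le (1-1/4)^{-1}\sup_{\mathcal N}$ applies. Here $\bm\Xi=[\bm\xi_1,\ldots,\bm\xi_n]$ and $\bm P_{\mathcal I}$ restricts the coordinates to $\mathcal I$. This costs only a constant factor.

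The main obstacle is absorbing the additive term $(d+\log(1/\delta))/n$ into $\alpha\log(e/\alpha)$, which is where condition \eqref{eq:lemma_new_n} enters. The assumption $n\ge\log(1/\delta)/\alpha$ gives $\log(1/\delta)/n\le\alpha$ directly. The $d/n$ term is more delicate: $n\ge d$ alone only gives $d/n\le 1$, not $d/n\lesssim\alpha\log(e/\alpha)$. I would try two remedies. The first is to read the condition as requiring $n\ge d/\alpha$, which matches how the lemma is used in \eqref{eq:res_rip} with $n$ large by \eqref{eq:cond:lem:lwb_gradient}. The second is to refine the net argument by restricting to the $(d+1)$-dimensional structure and controlling $\max_i\|\bm\xi_i\|^2$. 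I expect the first route to be what the application needs. I would flag that, as literally stated, the bound requires $d/n\lesssim \alpha\log(e/\alpha)$; this can be arranged by strengthening \eqref{eq:lemma_new_n} to $n\ge (d+\log(1/\delta))/\alpha$, which is still implied by \eqref{eq:cond:lem:lwb_gradient} for the choice of $\alpha$ used there.
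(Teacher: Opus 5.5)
Your argument is correct, and without any sample-size condition it proves that, with probability at least $1-\delta$,
\[
\sup_{|\mathcal I|\le\alpha n}\lambda_{\max}\Big(\frac1n\sum_{i\in\mathcal I}\bm\xi_i\bm\xi_i^\mathsf{T}\Big)\le C(\eta^2\vee1)\Big(\alpha\log(e/\alpha)+\frac{d+\log(1/\delta)}{n}\Big).
\]
Your worry about the $d/n$ term is well founded: the lemma as literally stated is false. Take $\bm x\sim\mathrm{Normal}(\bm 0,\bm I_d)$ (so $\eta=1$), $\delta=e^{-2}$, $n=d$ and $\alpha=2/d$. Then \eqref{eq:lemma_new_n} holds and singletons are admissible index sets. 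We get $\lambda_{\max}(\frac1n\bm\xi_1\bm\xi_1^\mathsf{T})=(\|\bm x_1\|^2+1)/d\ge1/2$ with probability tending to one, whereas the right-hand side of \eqref{eq:improved_lemm_alpha} is $(2C/d)\log(ed/2)\to0$. So your second remedy (refining via $\max_i\|\bm\xi_i\|^2$) cannot work. The hypothesis must be strengthened as in your first remedy, e.g.\ to $n\ge(d+\log(1/\delta))/\alpha$.

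The paper's proof has exactly this gap. Its chaining estimate ends at $\alpha n\log(e/\alpha)+d+\sqrt{\alpha nd}+\sqrt{u\alpha n}+u$, and it then asserts this is $\lesssim\alpha n\log(e/\alpha)$ ``by \eqref{eq:lemma_new_n}''. That absorbs $d$ and $\sqrt{\alpha nd}$ only when $\alpha n\gtrsim d$. Your claim that the strengthened condition holds where the lemma is used, in \eqref{eq:res_rip}, checks out. There $\alpha\ge k^{-3/2}\pi_{\min}^{2(1+\zeta^{-1})}$, so \eqref{eq:cond:lem:lwb_gradient} gives $\alpha n\ge C_1dk^{5/2}\pi_{\min}^{-2(1+\zeta^{-1})}\ge C_1d$. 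Also $\log(1/\delta)\lesssim d$ for $\delta$ of order $ke^{-d}$.

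Your route differs from the paper's. The paper follows Tan and Vershynin: it builds a process indexed by $\mathcal S_\alpha^n\times B_2^d$ with mixed sub-Gaussian/sub-exponential increments and applies Dirksen's generic chaining bound. It then sharpens the Dudley estimate of $\gamma_2$ to get $\alpha\log(e/\alpha)$ instead of $\sqrt\alpha$. You replace this with a single-scale argument:
\begin{itemize}
\item a Bernstein (or MGF) bound for each fixed pair of index set and direction;
\item a union bound over $\binom{n}{\lfloor\alpha n\rfloor}$ subsets and a $1/4$-net;
\item the seminorm net lemma to pass to the whole sphere.
\end{itemize}
By AM--GM the paper's intermediate bound equals yours up to constants, so chaining buys nothing here. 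The reason is that the entropy of the subset family, $\log\binom{n}{\alpha n}\approx\alpha n\log(e/\alpha)$, already matches the target scale, so a plain union bound is sharp. Your version is shorter and works directly with $\bm\xi_i=[\bm x_i;1]$, whereas the paper's process is written for vectors uniform on the sphere. It also makes the true sample-size requirement transparent.

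One small fix: $(en/m)^m=e^{m\log(e/\alpha)}$ holds only when $m=\alpha n$. In general, use that $x\mapsto x\log(en/x)$ is increasing on $(0,n]$ to get $\log\binom nm\le\alpha n\log(e/\alpha)$, and put that quantity in $s$.
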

Lemma \ref{lemm:improved_alpha} improves on the previous result \citep[Theorem 5.7]{tan2019phase} by reducing the dependence on $\alpha$ from $\sqrt{\alpha}$ to $\alpha \log(e/\alpha)$ in \eqref{eq:improved_lemm_alpha}. 

\begin{proof}[Proof of Lemma \ref{lemm:improved_alpha}]    
    We follow most of the arguments in the proof by \citet[Theorem 5.7]{tan2019phase} except a sharpened upper estimate of one geometric quantity. 
    Define the set of vectors with at most $\alpha n$ support elements as
    \begin{equation*}
        \mathcal{S}_{\alpha}^n := \left\{ \bm s \in \{0,1\}^n :  \|\bm s\|_1 \leq \alpha n\right\},
    \end{equation*}
    where $\alpha n \in \mathbb N$ is assumed without loss of generality.
    Define a stochastic process ($Y_{\bm s,\bm v}$) indexed by $\bm s\in \mathcal{S}_{\alpha}^n$ and $\bm v \in B_2^d$ as
    \begin{equation*}
        Y_{\bm s,\bm v} = \sum_{i=1}^n [\bm s]_i \langle \sqrt{d} \bm a_i, \bm v\rangle^2, 
    \end{equation*}
    where $\{\bm a_i\}_{i=1}^n$ are independent copies of $\bm a$ that is uniformly distributed over the unit sphere $\mathbb{S}^{d-1}$ in $\ell_2^d$. 
    \citet[Lemma 5.4]{tan2019phase} showed that $Y_{\bm s,\bm v}$ has mixed tail increments with respect to the metrics $d_1$ and $d_2$ respectively induced by the corresponding norms 
    $\tnorm{(\bm s, \bm v)}_1 := \|\bm s\|_\infty\vee \|\bm v\|_2$ and $\tnorm{(\bm s, \bm v)}_2 :=\|\bm s\|_2\vee(\sqrt{\alpha n} \|\bm v \|_2)$, i.e. there exist absolute constants $C_1,C_2 > 0$ such that
    \begin{equation*}
        \mathbb P\left(|Y_{\bm s, \bm v}-Y_{\bm s', \bm v'}| \geq  C_1\left[\sqrt{u}d_2((\bm s,\bm v),(\bm s',\bm v'))+ ud_1((\bm s, \bm v),(\bm s', \bm v'))\right]  \right) 
        \leq C_2 e^{-u}
    \end{equation*}
    holds for every $(\bm s, \bm v),(\bm s', \bm v') \in \mathcal{S}_{\alpha}^n\times B_2^d$. 
    Therefore, due to the chaining bound by \citet[Theorem 5]{dirksen2015tail}, the supremum of $Y_{\bm s, \bm v}$ is upper-bounded as  
    \begin{align*}
\sup_{\begin{subarray}{c} \bm s, \bm s'\in \mathcal{S}_\alpha^n \\ \bm v,\bm v'\in B_2^n \end{subarray}} \left|Y_{\bm s,\bm v} - Y_{\bm s',\bm v'}\right| &\leq C \Big( \gamma_2(\mathcal{S}_{\alpha}^n\times B_2^d,d_2) + \gamma_1(\mathcal{S}_{\alpha}^n\times B_2^d,d_1) \\
&\quad + \sqrt{u}\mathrm{diam}(\mathcal{S}_{\alpha}^n\times B_2^d,d_2)  + u\mathrm{diam}(\mathcal{S}_{\alpha}^n\times B_2^d,d_1) \Big),
\end{align*}
    with probability at least $1-e^{-u}$ for any $u \geq 1$ where $\gamma_1$ and $\gamma_2$ denote the Talagrand functionals \citep[Section 2.3]{talagrand2014upper}, and $\mathrm{diam}(\mathcal{S}_{\alpha}^n\times B_2^d,d_k)$ denotes the diameter of $\mathcal{S}_{\alpha}^n\times B_2^d$ with respect to the metric $d_k$ for $k = 1,2$. 
    We use the calculations on $\gamma_1$ and the diameters by \citet{tan2019phase} given by  
    \begin{equation}
        \label{eq:gamma1_bound}
        \gamma_1(\mathcal{S}_{\alpha}^n\times B_2^d,d_1) 
        \lesssim 
        \left [\alpha n \log\left(e/\alpha\right) +d\right],\quad \mathrm{diam}(\mathcal{S}_\alpha^n\times B_2^n,d_1)=2,\quad \mathrm{diam}(\mathcal{S}_\alpha^n\times B_2^n,d_2)= 2\sqrt{\alpha n},
    \end{equation}
    where $f\lesssim g$ denotes $f\leq Cg$ for some absolute constant $C>0$. 
    The only and important distinction is to get a sharpened upper bound on $\gamma_2$ as follows. 
    Using the Dudley bound, we have 
    \begin{align*}
         \gamma_2 (\mathcal{S}_{\alpha}^n\times B_2^d,d_2)
         & \lesssim \int_0^\infty \sqrt{\log N(\mathcal{S}_{\alpha}^n\times B_2^d,d_2, u)}du \nonumber \\
        &\leq  \int_0^\infty \sqrt{\log \left[N(\mathcal{S}_{\alpha}^n,\| \cdot\|_2, u)N(B^n_2,\sqrt{\alpha n}\| \cdot\|_2, u)\right]}du \nonumber \\
         & \lesssim 
                \underbrace{\int_0^\infty \sqrt{\log N(\mathcal{S}_{\alpha}^n,\|\cdot\|_2, u)}du}_{L_1}
                   + \underbrace{\int_0^\infty \sqrt{\log N( B_2^d,\sqrt{\alpha  n}\|\cdot\|_2, u)}du}_{L_2},
    \end{align*}
    where $N$ denotes the covering number and the second inequality follows from the definition of $d_2$.
    We further proceed by obtaining an upper bound on each of $L_1$ and $L_2$. Define the set of all possible support vectors with cardinality $\alpha n$ as
    \begin{equation*}
        \mathcal{I} := \left\{ \bm \zeta \in \{0,1\}^n : \|\bm \zeta\|_1 = \alpha n\right\}.
    \end{equation*}
    Then the cardinality of this set is upper-bounded as 
    $|\mathcal{I}| = \binom{n}{\alpha n} \leq (1/\alpha)^{\alpha n}$ using Stirling's approximation.
    Let $[0,1]^{\bm \zeta}$ denote the unit cube in the coordinate set by $\bm \zeta$, i.e. 
    \[
    [0,1]^{\bm \zeta} := [\bm \zeta]_1 [0,1] \times \dots \times [\bm \zeta]_n [0,1],
    \]
    where 
    \[
    [\bm \zeta]_i [0,1] = 
    \begin{cases}
    [0,1] & \mathrm{if~} [\bm \zeta]_i = 1 \\
    \{0\} & \mathrm{if~} [\bm \zeta]_i = 0.
    \end{cases}
    \]    
    Then we have the following set inclusion relation  
    \begin{equation*}
        \mathcal{S}_\alpha^n \subset \sum_{\bm \zeta \in \mathcal{I}}[0,1]^{\bm \zeta}.
    \end{equation*}
    Therefore, it follows that
    \begin{align*}
        L_1 
        & \leq \int_0^1 \sqrt{\log N(\mathcal{S}_{\alpha}^n,\|\cdot\|_2, u) }du 
        \leq \int_0^1 \left[ \log N\left(\sum_{\bm \zeta \in \mathcal{I}}[0,1]^{\bm \zeta},\|\cdot\|_2, u\right) \right]^{1/2} du \nonumber \\
       & \lesssim \int_0^1 \sqrt{\log |\mathcal{I}|}du +\sup_{\bm \zeta \in \mathcal{I}} \int_0^1 \sqrt{\log N\left([0,1]^{\bm \zeta},\|\cdot\|_2, u\right)}du  \nonumber \\
       & \leq   \sqrt{\log |\mathcal{I}|}  + \int_0^1 \sqrt{\log N\left(\sqrt{\alpha n}B_2^{\alpha n},\|\cdot\|_2, u\right)}du  \nonumber \\
       & \leq \sqrt{\alpha n \log(1/\alpha)} + \sqrt{\alpha n} \int_0^1 \sqrt{\log N\left(B_2^{\alpha n},\|\cdot\|_2, u\right)}du \nonumber  \\
       & \leq \sqrt{\alpha n \log(1/\alpha)} + \alpha n \leq  \sqrt{n} +\alpha n ,
    \end{align*}
    where the last inequality holds for $\alpha \log(1/\alpha) \leq 1$ for all $\alpha >0$. 
    Combined with $L_2 \lesssim \sqrt{\alpha n d}$, we obtain
    \begin{equation}
        \label{eq:gamma2_bound}
        \gamma_2 (\mathcal{S}_{\alpha}^n\times B_2^d,d_2) \lesssim \alpha n + \sqrt{\alpha nd}.
    \end{equation}
Since $Y_{\bm s, \bm 0} = 0$ for all $\bm s \in \mathcal{S}_\alpha^n$, combining the previous results in \eqref{eq:gamma1_bound} and the upper bound on $\gamma_2$ in \eqref{eq:gamma2_bound} yields that, for any $u \geq 1$, it holds with probability at least $1-e^{-u}$ that
    \begin{align*} 
        \sup_{s\in \mathcal{S}_\alpha^n, \bm v\in B_2^n} Y_{s,\bm v}  &\lesssim 
          \alpha n \log(e/\alpha)+ d + \sqrt{\alpha n d} + \sqrt{un\alpha} + u 
        \lesssim \alpha n\log(e/\alpha),
    \end{align*}
where the last inequality follows from \eqref{eq:lemma_new_n}. 
\end{proof}

\subsection{Proof of Corollary \ref{theo:general} } \label{append:proof:theo_general}
Let $\hat{\bm \beta} = \bm \beta^t$ and $\hat{\bm \alpha} = \bm \alpha^t$ be the final estimates from ABGD with sufficiently large step count $t$. 
By the definition of the DoMA function in \eqref{eq:classconf}, we have
\[
f(\bm x; \bm \beta, \bm \alpha ) 
= f(\bm x; \bm \beta, \bm 0 ) - f(\bm x; \bm 0, \bm \alpha ), \quad \forall \bm \alpha, \bm \beta.
\]
Therefore, the left-hand side of \eqref{eq:cor_res} is upper-bounded as
\begin{align}\label{eq:general_split}
\mathbb E_{\bm x} \left[\left|f(\bm x; \hat{\bm \beta}, \hat{\bm \alpha} ) - f(\bm x; \bm \beta^\star, \bm \alpha^\star ) \right|^2 \right] \leq &2\mathbb E_{\bm x} \left[\left|f(\bm x; \hat{\bm \beta}, \bm 0 ) - f(\bm x; \bm \beta^\star, \bm 0 ) \right|^2 \right] \nonumber \\ &+ 2\mathbb E_{\bm x} \left[\left|f(\bm x; \bm 0,\hat{\bm \alpha} ) - f(\bm x; \bm 0,\bm \alpha^\star ) \right|^2 \right].  
\end{align}
Let $\hat{\mathcal{B}}_j =\mathcal{C}_j(\hat{\bm \beta})$ and $\hat{\mathcal{B}}_j^\star =\mathcal{C}_j(\bm \beta^\star)$ for all $j \in [k]$. 
Then the first summand of the right-hand side of \eqref{eq:general_split} is bounded from above as
\begin{align}\label{eq:beta_split}
   \mathbb E_{\bm x} &\left[\left|f(\bm x; \hat{\bm \beta}, \bm 0 ) - f(\bm x; \bm \beta^\star, \bm 0 ) \right|^2 \right]\nonumber \\ \leq &\mathbb E \left[ \left| \sum_{j,j' \in [k]} \mathbf{1}_{\{\bm x \in \hat{\mathcal{B}}_j\cap \mathcal{B}_{j'}^\star\}} \langle [\bm x;1], \hat{\bm \beta}_j - \bm \beta_{j'}^\star \rangle \right|^2 \right] \nonumber \\
   \leq& \underbrace{\mathbb E \left[ \left| \sum_{j,j' \in [k]} \mathbf{1}_{\{\bm x \in \hat{\mathcal{B}}_j\cap \mathcal{B}_{j'}^\star\}} \langle [\bm x;1], \hat{\bm h}_j \rangle \right|^2 \right]}_{\mathrm{(I)}} + \underbrace{ \mathbb E \left[ \left| \sum_{j\neq j' \in [k]} \mathbf{1}_{\{\bm x \in \hat{\mathcal{B}}_j\cap \mathcal{B}_{j'}^\star\}} \langle [\bm x;1], \bm v_{jj'}^\star \rangle \right|^2 \right] }_{\mathrm{(II)}},
\end{align}
where $\hat{\bm h}_j = \hat{\bm \beta}_j - \bm \beta_j^\star$ and $\bm v_{jj'}^\star = \bm \beta_j^\star - \bm \beta_{j'}^\star$. 
First, we derive a bound on $\mathrm{(I)}$ as 
\begin{align}
    \mathrm{(I)} =& \mathbb E \left[ \underbrace{\left( \sum_{j' \in [k]} \mathbf{1}_{\{\bm x \in \mathcal{B}_{j'}^\star\}} \right)}_{=1} \sum_{j \in [k]} \mathbf{1}_{\{\bm x \in \hat{\mathcal{B}}_j\}}  \langle [\bm x;1], \hat{\bm h}_j \rangle^2 \right] 
    = \mathbb E \left[ \sum_{j \in [k]} \mathbf{1}_{\{\bm x \in \hat{\mathcal{B}}_j\}}  \hat{\bm h}_j^\mathsf{T} [\bm x;1][\bm x;1]^\mathsf{T} \hat{\bm h}_j \right] \nonumber \\
    \leq& \sum_{j \in [k]}   \hat{\bm h}_j^\mathsf{T} \mathbb E \left[ [\bm x;1][\bm x;1]^\mathsf{T} \right] \hat{\bm h}_j 
    = \sum_{j \in [k]}   \hat{\bm h}_j^\mathsf{T} \bm I_{d+1} \hat{\bm h}_j = \sum_{j\in [k]} \left\|\hat{\bm h}_j\right\|^2 = \left\|\hat{\bm \beta} - \bm \beta^\star\right\|^2,
\end{align}
where the first equality follows from $\mathbf{1}_{\{\bm x \in \hat{\mathcal{B}}_{j_1}\cap \mathcal{B}_{j_1'}^\star\}} \mathbf{1}_{\{\bm x \in \hat{\mathcal{B}}_{j_2}\cap \mathcal{B}_{j_2'}^\star\}} \neq 0$ only if $j_1 = j_2$ and $j_1'=j_2'$, and the third equality follows from Assumption \ref{assum:subg}. 
Let $\hat{\bm v}_{jj'} = \hat{\bm \beta}_j - \hat{\bm \beta}_{j'}$. 
Then we obtain an upper bound on $\mathrm{(II)}$ as
\begin{align}
    \mathrm{(II)}=&  \mathbb E \left[ \left| \sum_{j\neq j' \in [k]} \mathbf{1}_{\{\bm x \in \hat{\mathcal{B}}_j\cap \mathcal{B}_{j'}^\star\}} \langle [\bm x;1], \bm v_{jj'}^\star \rangle \right|^2 \right] 
    =  \mathbb E  \left[ \sum_{j\neq j' \in [k]} \mathbf{1}_{\{\bm x \in \hat{\mathcal{B}}_j\cap \mathcal{B}_{j'}^\star\}} \langle [\bm x;1], \bm v_{jj'}^\star \rangle^2 \right] \nonumber \\
    \leq & \frac{2}{5 \gamma k}\left(\frac{\pi_{\mathrm{min}}}{16}\right)^{1+\zeta^{-1}} \sum_{j' \neq j \in [k]}\|\hat{\bm v}_{jj'}- \bm v_{jj'}^\star\|^2 = \frac{2}{5 \gamma k}\left(\frac{\pi_{\mathrm{min}}}{16}\right)^{1+\zeta^{-1}} \sum_{j' \neq j \in [k]}\|\hat{\bm h}_{j}- \hat{\bm h}_{j'}\|^2 \nonumber\\
     \leq & \frac{2}{5 \gamma k}\left(\frac{\pi_{\mathrm{min}}}{16}\right)^{1+\zeta^{-1}} \sum_{j' \neq j \in [k]} \left( \|\hat{\bm h}_{j}\|^2+\| \hat{\bm h}_{j'}\|^2 \right) \leq \frac{4(k-1)}{5\gamma k}\left(\frac{\pi_{\min}}{16}\right)^{1+\zeta^{-1}} \left \| \hat{\bm \beta} - \bm \beta^\star\right\|^2,
\end{align}
where the first inequality follows by taking the expectation of \citet[Lemma B.5]{kanj2024sparse}.  
Plugging $\mathrm{(I)}$ and $\mathrm{(II)}$ into \eqref{eq:beta_split} yields 
\begin{equation}\label{eq:beta_split_bound}
    \mathbb E_{\bm x} \left[\left|f(\bm x; \hat{\bm \beta}, \bm 0 ) - f(\bm x; \bm \beta^\star, \bm 0 ) \right|^2 \right] \leq \left(\frac{8(k-1)}{4\gamma k}\left(\frac{\pi_{\min}}{16}\right)^{1+\zeta^{-1}} +1\right) \cdot \left \| \hat{\bm \beta} - \bm \beta^\star\right\|^2.
\end{equation}
By symmetry, 
%of $(\hat{\bm \alpha},\bm \alpha^\star)$ and $(\hat{\bm \beta},\bm \beta^\star)$, 
we also have 
\begin{equation}\label{eq:alpha_split_bound}
    \mathbb E_{\bm x} \left[\left|f(\bm x; \bm 0,\hat{\bm \alpha} ) - f(\bm x; \bm 0,\bm \alpha^\star ) \right|^2 \right] \leq \left( \frac{8(k-1)}{4\gamma k}\left(\frac{\pi_{\min}}{16}\right)^{1+\zeta^{-1}} +1 \right) \cdot \left \| \hat{\bm \alpha} - \bm \alpha^\star\right\|^2.
\end{equation}
By plugging \eqref{eq:beta_split_bound} and \eqref{eq:alpha_split_bound} into \eqref{eq:general_split}, we obtain
\begin{align} \label{eq:general_final}
    \mathbb E_{\bm x} \left[\left|f(\bm x; \hat{\bm \beta}, \hat{\bm \alpha} ) - f(\bm x; \bm \beta^\star, \bm \alpha^\star ) \right|^2 \right] &\leq 2\left( \frac{8(k-1)}{4\gamma k}\left(\frac{\pi_{\min}}{16}\right)^{1+\zeta^{-1}} +1 \right) \cdot \left\|\begin{bmatrix} \hat{\bm \beta} \\ \hat{\bm \alpha} \end{bmatrix} - \begin{bmatrix} \bm \beta^\star \\ \bm \alpha^\star \end{bmatrix} \right\|^2_2 \nonumber \\
    &\leq \frac{C\sigma_z^2 d (k_1\vee k_2) \log(n/d)}{n},
\end{align}
where the last inequality follows from Theorem \ref{Theo:main}. For sufficiently large $t$, the first term of the right-hand side in \eqref{eq:theo_param_error} is dominated by the second term, resulting in the upper bound in \eqref{eq:general_final}.

\subsection{Proof of Lemma \ref{lemm:Tau_all}} \label{append:proof:tau}
We can rewrite the first term $\mathcal{T}_1^j$ in \eqref{eq:bjt_decomposition} as 
\begin{equation}
    \label{eq:T1}
\mathcal{T}_1^j = \frac{1}{n} \sum_{l=1}^k\sum_{\substack{p=1\\p\neq l}}^k \bm E^1_{p,l} \bm v^1_{p,l}
\end{equation}
with 
\begin{align*}
    \bm E^1_{p,l} :=[\mathbf{1}_{\{\bm x_1\in \mathcal{B}_j^t \cap \mathcal{A}_l^t \cap \mathcal{A}_p^\star\}}\bm \xi_1,\ldots,\mathbf{1}_{\{\bm x_n\in \mathcal{B}_j^t \cap \mathcal{A}_l^t \cap \mathcal{A}_p^\star\}}\bm \xi_n]
\end{align*}
and 
\begin{equation*}
     \bm v_{p,l}^1 := [\mathbf{1}_{\{\bm x_1\in \mathcal{B}_j^t \cap \mathcal{A}_l^t \cap \mathcal{A}_p^\star\}}\langle\bm \xi_1, \bm w^\star_{pl} \rangle,\ldots,\mathbf{1}_{\{\bm x_n\in \mathcal{B}_j^t \cap \mathcal{A}_l^t \cap \mathcal{A}_p^\star\}}\langle\bm \xi_n, \bm w^\star_{pl} \rangle]^\mathsf{T}.
\end{equation*}
Since the summand vectors in \eqref{eq:T1} are pairwise orthogonal due to the exclusive indicator functions, we have 
\begin{equation}\label{eq:T1_squared}
\| \mathcal{T}_1^j \|^2 \leq \frac{1}{n^2} \sum_{l=1}^k\sum_{\substack{p=1\\p\neq l}}^k \|\bm E^1_{p,l}\|^2 \|\bm v^1_{p,l}\|^2.
\end{equation}
We first upper bound the factor $\|\bm v^1_{p,l}\|^2$ in each summand in the right-hand side of \eqref{eq:T1_squared} as 
\begin{align}\label{eq:v1_proof}
   \frac{1}{n} \|\bm v^1_{p,l}\|^2 &= \frac{1}{n}\sum_{i=1}^n\mathbf{1}_{\{\bm x_n\in \mathcal{B}_j^t \cap \mathcal{A}_l^t \cap \mathcal{A}_p^\star\}}\langle\bm \xi_n, \bm w^\star_{pl} \rangle^2 \leq \frac{1}{n}\sum_{i=1}^n\mathbf{1}_{\{\bm x_n\in  \mathcal{A}_l^t \cap \mathcal{A}_p^\star\}}\langle\bm \xi_n, \bm w^\star_{pl} \rangle^2 \nonumber\\&
   \leq \frac{2}{5\gamma k}\left(\frac{\pi_{\min}}{16}\right)^{1+\zeta^{-1}}\|\bm w^t_{pl} -\bm w_{pl}^\star \|^2= \frac{2}{5\gamma k}\left(\frac{\pi_{\min}}{16}\right)^{1+\zeta^{-1}}\|\bm q^t_{l} -\bm q^t_{p} \|^2,
\end{align}
where the second inequality follows since \eqref{eq:cond:lem:lwb_gradient} invokes \citep[Lemma SM2.7]{kim2024max} with probability at least $1-ke^{-d}/2$ and the last equality follows from the definitions in \eqref{eq:Notation}. Furthermore, we have
\begin{equation}\label{eq:int1proof}
    \frac{1}{n}\sum_{i=1}^n\mathbf{1}_{\{\bm x_i\in \mathcal{B}_j^t \cap \mathcal{A}_l^t \cap \mathcal{A}_p^\star\}} \leq \frac{1}{n}\sum_{i=1}^n\mathbf{1}_{\{\bm x_i\in  \cap \mathcal{A}_l^t \cap \mathcal{A}_p^\star\}} \leq C\left(\frac{R^\zeta\pi_{\min}^{1+\zeta^{-1}}}{k}\right)^2 
\end{equation}
where the second inequality holds since \eqref{eq:cond:lem:lwb_gradient} invokes \citep[Lemma SM2.7]{kim2024max} with probability at least $1-ke^{-d}/2$. Using \eqref{eq:int1proof} and \citep[Theorem 5.7]{tan2019phase}, we have with probability at least $1-ke^{-d}/2$ that 
\begin{equation}\label{eq:E1_proof}
     \frac{1}{n}\|\bm E^1_{p,l}\|^2 \leq C(\eta^2 \vee 1)\frac{R^\zeta\pi_{\min}^{1+\zeta^{-1}}}{k} \leq \frac{\pi_{\min}^{1+\zeta^{-1}}}{k},
\end{equation}
where the last inequality follows from a suitable choice of $R>0$. Therefore, combining \eqref{eq:v1_proof} and \eqref{eq:E1_proof}, we have that
\begin{equation}
    \label{eq:T1_bound}
    \| \mathcal{T}_1^j \|^2= \frac{2}{5\gamma k^2}\left(\frac{1}{16}\right)^{1+\zeta^{-1}}\pi_{\min}^{2(1+\zeta^{-1})}\sum_{l=1}^k\sum_{\substack{p=1\\p\neq l}}^k\|\bm q^t_{l} -\bm q^t_{p} \|^2 \leq \frac{1}{5\gamma k}\left(\frac{1}{16}\right)^{1+\zeta^{-1}}\pi_{\min}^{2(1+\zeta^{-1})}\sum_{l=1}^k\|\bm q^t_{l}\|^2.
\end{equation}
Similarly, the second term $\mathcal{T}_2^j$ in \eqref{eq:bjt_decomposition} is rewritten as 
\[
\mathcal{T}_2^j = \frac{1}{n} \sum_{l=1}^k\sum_{\substack{j'=1\\j'\neq j}}^k \bm E^2_{j',l} (\bm E^2_{j',l})^\mathsf{T}(-\bm a_l^t ), 
\quad \text{where} \quad 
\bm E^2_{j',l} := [\mathbf{1}_{\{\bm x_1\in \mathcal{B}_j^t \cap \mathcal{B}_{j'}^\star \cap \mathcal{A}_l^t\}}\bm \xi_n,\ldots,\mathbf{1}_{\{\bm x_1\in \mathcal{B}_j^t \cap \mathcal{B}_{j'}^\star \cap \mathcal{A}_l^t\}}\bm \xi_n].
\]
Similar to \eqref{eq:int1proof}, we have \begin{equation}\label{eq:int2proof}
    \frac{1}{n}\sum_{i=1}^n \mathbf{1}_{\{\bm x_i\in \mathcal{B}_j^t \cap \mathcal{B}_{j'}^\star \cap \mathcal{A}_l^t\}} \leq \frac{1}{n}\sum_{i=1}^n \mathbf{1}_{\{\bm x_i\in \mathcal{B}_j^t \cap \mathcal{B}_{j'}^\star \}} \leq C\left(\frac{R^\zeta\pi_{\min}^{1+\zeta^{-1}}}{k}\right)^2,
\end{equation}
where the second inequality follows since \eqref{eq:cond:lem:lwb_gradient} invokes \citep[Lemma SM2.7]{kim2024max} with probability at least $1-ke^{-d}/2$.
Due to \citet[Theorem 5.7]{tan2019phase}, it holds with probability at least $1-ke^{-d}/2$ that
\begin{equation}\label{eq:E2_proof}
     \frac{1}{n}\|\bm E^2_{p,l}\|^2 \leq C(\eta^2 \vee 1)\frac{R^\zeta\pi_{\min}^{1+\zeta^{-1}}}{k} \leq \frac{\pi_{\min}^{1+\zeta^{-1}}}{k},
\end{equation}
where the last inequality follows from a suitable choice of $R>0$. 
Therefore, using \eqref{eq:int2proof} and \eqref{eq:E2_proof} we have 
\begin{equation}\label{eq:T3bound}
    \|\mathcal{T}_2^j\|^2 = \sum_{l=1}^k\sum_{\substack{j'=1\\j'\neq j}}^k \|\bm E^2_{j',l} \|^4\|\bm a_l^t \|^2/n^2 \leq \frac{\pi_{\min}^{2(1+\zeta^{-1})}}{k} \sum_{l=1}^k \|\bm q_l^t \|^2.
\end{equation}
Finally, the last term $\mathcal{T}_4^j$ in \eqref{eq:bjt_decomposition} is also rewritten as 
\[
\mathcal{T}_4^j = \frac{1}{n} \sum_{l=1}^k\sum_{\substack{p=1\\p\neq l}}^k \bm E^4_{p,l} (\bm E^4_{p,l})^\mathsf{T}(-\bm a_l^t ), \] where \[ 
     \bm E^4_{p,l} := [\mathbf{1}_{\{\bm x_1\in \mathcal{B}_j^t \cap \mathcal{B}_j^\star\cap \mathcal{A}_l^t \cap {\mathcal{A}_p^\star} \}}\bm \xi_1,\ldots,\mathbf{1}_{\{\bm x_n\in \mathcal{B}_j^t \cap \mathcal{B}_j^\star\cap \mathcal{A}_l^t \cap {\mathcal{A}_p^\star} \}}\bm \xi_n].
     \]
Similar to \eqref{eq:int1proof}, we have 
\begin{equation}\label{eq:int3proof}
    \frac{1}{n}\sum_{i=1}^n\mathbf{1}_{\{\bm x_i\in \mathcal{B}_j^t \cap \mathcal{B}_j^\star\cap \mathcal{A}_l^t \cap {\mathcal{A}_p^\star} \}} \leq \frac{1}{n}\sum_{i=1}^n \mathbf{1}_{\{\bm x_i\in \mathcal{A}_l^t \cap {\mathcal{A}_p^\star} \}} \leq C\left(\frac{R^\zeta\pi_{\min}^{1+\zeta^{-1}}}{k}\right)^2,
\end{equation}
where the second inequality follows since \eqref{eq:cond:lem:lwb_gradient} invokes \citep[Lemma SM2.7]{kim2024max} with probability at least $1-ke^{-d}/2$.
Due to \citet[Theorem 5.7]{tan2019phase}, it holds with probability at least $1-ke^{-d}/2$ that  
\begin{equation}\label{eq:E4_proof}
     \frac{1}{n}\|\bm E^4_{p,l}\|^2 \leq C(\eta^2 \vee 1)\frac{R^\zeta\pi_{\min}^{1+\zeta^{-1}}}{k} \leq \frac{\pi_{\min}^{1+\zeta^{-1}}}{k},
\end{equation}
where the last inequality follows from a suitable choice of $R>0$. 
Therefore, it follows from \eqref{eq:int3proof} and \eqref{eq:E4_proof} that $\|\mathcal{T}_4^j\|^2$ is upper-bounded as
\begin{equation} \label{eq:T4bound}
    \|\mathcal{T}_4^j\|^2 = \sum_{l=1}^k\sum_{\substack{j'=1\\j'\neq j}}^k \|\bm E^4_{j',l} \|^4\|\bm q_l^t \|^2/n^2 \leq \frac{\pi_{\min}^{2(1+\zeta^{-1})}}{k} \sum_{l=1}^k \|\bm q_l^t \|^2.
\end{equation}
Combining \eqref{eq:T1_bound}, \eqref{eq:T3bound}, and \eqref{eq:T4bound} completes the proof.

%%%%%%%%%%%%%%%%%%%%%%%%%%%%%%%%%%%%%%%%%%%%%%%%%%%%%%%%%%%%%%%%%%%%%%%%%%%%%%%
%%%%%%%%%%%%%%%%%%%%%%%%%%%%%%%%%%%%%%%%%%%%%%%%%%%%%%%%%%%%%%%%%%%%%%%%%%%%%%%

\subsection{Minimax Lower Bound}
The following lemma is a simple generalization of the minimax lower bound by \citet[Proposition]{ghosh2021max} for max-affine regression to difference of max-affine regression. 
\begin{lemma}
\label{lemm:minimax}
Recall the difference of max-affine observation model written as
\begin{equation}
y = \max_{j \in [k_1]} \left( \bm \beta_j^\top [\bm x;1] \right) 
   - \max_{l \in [k_2]} \left( \bm  \alpha_l^\top [\bm x;1] \right) 
   + z,
\end{equation}
where $\bm x \in \mathbb{R}^d$ has i.i.d. Gaussian entries with mean zero and unit variance, 
$z$ is independent Gaussian noise with variance $\sigma_z^2$.
Let $\bm B = [\bm \beta_1, \ldots, \bm \beta_{k_1}]$ and $\bm A = [\bm \alpha_1, \ldots, \bm \alpha_{k_2}]$ 
collect the parameters as columns. With the dataset $\{(\bm x_i,y_i)\}_{i=1}^n$, 
denote the design matrix by $\bm \Xi \in \mathbb{R}^{n \times (d+1)}$ with rows $\bm \xi_i^\top = [\bm x_i;1]^\top$.  
Then we have that 
\begin{equation}
\label{eq:main_mini_x}
\inf_{\hat{\bm B}, \hat{\bm A}} ~ \sup_{\bm B, \bm A} 
\mathbb{E} \left[ \frac{1}{n} \left\| \bm \Xi(\hat{\bm B} - \bm B) \right\|_{\mathrm{F}}^2 
+ \frac{1}{n} \left\| \bm \Xi(\hat{\bm A} - \bm A) \right\|_{\mathrm{F}}^2 \right] \geq C \frac{\sigma_z^2 (k_1+k_2)d}{n}.
\end{equation}
\end{lemma}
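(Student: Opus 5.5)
The plan is to reduce the bound to a minimax lower bound for a linear-regression-type subproblem, combined with Fano's method over a packing of the parameter space. This parallels \citet{ghosh2021max}.

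\textbf{Step 1: restrict to a sub-family.} The supremum only grows if we enlarge the family, so it suffices to exhibit a sub-family on which the risk is large. Fix well-separated base parameters $\bar{\bm B}=[\bar{\bm\beta}_1,\ldots,\bar{\bm\beta}_{k_1}]$ and $\bar{\bm A}$, in the sense of Definition~\ref{eq:Theta_class}, so that each pair $(j,l)$ activates on a cell of probability bounded below. Then consider perturbations $\bm B=\bar{\bm B}+\delta\bm\Delta_B$ and $\bm A=\bar{\bm A}+\delta\bm\Delta_A$, where the sign patterns $(\bm\Delta_B,\bm\Delta_A)$ are drawn from a Gilbert--Varshamov packing of $\{\pm1\}^{(d+1)(k_1+k_2)}$. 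This packing has cardinality $e^{c(k_1+k_2)d}$ and pairwise Hamming distance at least a constant fraction of $(k_1+k_2)(d+1)$.

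\textbf{Step 2: the KL (Fano) side.} Conditioned on $\bm\Xi$, the responses are Gaussian with variance $\sigma_z^2$. The KL divergence between two hypotheses is therefore
\[
\frac{1}{2\sigma_z^2}\sum_i \big(f(\bm x_i;\bm B,\bm A)-f(\bm x_i;\bm B',\bm A')\big)^2 .
\]
Since max is $1$-Lipschitz, this is at most
\[
\frac{1}{\sigma_z^2}\big(\|\bm\Xi(\bm B-\bm B')\|_{2,\infty\text{-rows}}^2+\|\bm\Xi(\bm A-\bm A')\|^2\big).
\]
In expectation over the Gaussian design, this is bounded by $\frac{n}{\sigma_z^2}\big(\|\bm B-\bm B'\|_F^2+\|\bm A-\bm A'\|_F^2\big)\lesssim n\delta^2(k_1+k_2)d/\sigma_z^2$. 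Choose
\[
\delta^2\asymp \frac{\sigma_z^2}{n}.
\]
With this choice the average KL is at most a small constant times $\log$ of the packing size, so Fano gives constant error probability for any test.

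\textbf{Step 3: the separation side.} Convert separation in Frobenius norm into separation in the empirical loss. Two points are needed:
\begin{itemize}
\item $\frac1n\|\bm\Xi\bm M\|_F^2\geq c\|\bm M\|_F^2$ with high probability once $n\gtrsim d$, by standard lower-isometry bounds for Gaussian designs.
\item The loss is a sum of two nonnegative terms in $\bm B$ and $\bm A$, so separation in the stacked parameter implies separation in the loss. This requires a reduction from estimation to testing using the triangle inequality for the (pseudo)metric $\frac1n\|\bm\Xi(\cdot)\|_F^2$. Because $\bm\Xi$ is random, this metric is random, so one conditions on $\bm\Xi$ or works on the high-probability event.
\end{itemize}
Packing separation is then $\gtrsim\delta^2(k_1+k_2)d\asymp\sigma_z^2(k_1+k_2)d/n$, and Fano yields \eqref{eq:main_mini_x}.

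\textbf{Main obstacle.} The hardest step is identifiability: the loss as written in \eqref{eq:main_mini_x} is measured in parameter space rather than function space. The shift ambiguity $\bm B+\bm v\bm 1^\top$, $\bm A+\bm v\bm 1^\top$ and permutation ambiguity do not affect the data but do affect the loss. This actually helps a lower bound, but the packing must keep all hypotheses inside a region where the risk metric is well behaved. I would handle it by one of two devices:
\begin{itemize}
\item perturbing only in directions orthogonal to the common-shift direction; or
\item perturbing only a single block, e.g. $\bm\beta_1$ and $\bm\alpha_1$ coordinates, and lower-bounding by the max of the two single-block bounds.
\end{itemize}
Each block contributes order $\sigma_z^2 d/n$, since the active cell of probability $\asymp 1/k$ reduces the effective sample size. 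Summing or maximizing appropriately then recovers the $(k_1+k_2)$ factor, exactly as in \citet{ghosh2021max}.
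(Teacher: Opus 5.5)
Your Steps 1--3 are correct and follow the paper's argument: Fano's method over a packing with $\log M\gtrsim(k_1+k_2)d$, with the KL divergence bounded, via the $1$-Lipschitz property of the row-wise max, by a constant times $\sigma_z^{-2}$ times the squared prediction-norm distance, and with $\delta^2\asymp\sigma_z^2/n$. Your choice to pack in parameter space, average the KL over the design, and transfer separation to $\frac1n\|\bm\Xi(\cdot)\|_{\mathrm F}^2$ via a lower-isometry event handles the random $\bm\Xi$ more cleanly than the paper's packing directly in prediction space. The ``main obstacle'' paragraph is unnecessary and should be dropped: non-identifiable hypotheses only lower the KL while the loss is measured in parameter space, and the single-block device as written would not work, since maximizing per-block bounds of order $\sigma_z^2 d/n$ loses the $(k_1+k_2)$ factor.
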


\begin{proof}
We proceed with a standard application of Fano’s method.

Let the tolerance parameter be denoted by $\epsilon > 0$. 
Define the local class of design-transformed parameter matrices:
\begin{equation}
\mathcal{F} 
= \Big\{ \bm \Xi \bm B \in \mathbb{R}^{n \times k_1}, \; 
           \bm \Xi \bm A \in \mathbb{R}^{n \times k_2} :
           \| \bm \Xi \bm B \|_{\mathrm{F}} \leq  4\epsilon \sqrt{n k_1}, 
           \| \bm \Xi \bm A \|_{\mathrm{F}} \leq 4\epsilon \sqrt{n k_2}
   \Big\}.
\end{equation}
We construct a $2 \epsilon \sqrt{n}$-packing of each column space under the Frobenius norm as $\{\bm B^1, \ldots, \bm B^M, \bm A^1, \ldots, \bm A^M\}$. This can be achieved by a $\ell_2$-norm $2\epsilon\sqrt{n}$-packing of each column of $\bm B$ and $\bm A$.  
By standard volumetric packing arguments, there exist packings with cardinality $2M$ such that 
\begin{equation}
\log M \geq  C  (k_1+k_2)(d+1).
\end{equation}
For any two distinct packed elements $(\bm B^j, \bm A^i)$ and $(\bm B^j, \bm A^j)$, we have
\begin{equation}
2 \epsilon \sqrt{k_1 + k_2} \;\;\le\;\;
\frac{1}{\sqrt{n}} \left( 
   \| \bm \Xi(\bm B^i - \bm B^j) \|_{\mathrm{F}}^2
 + \| \bm \Xi(\bm A^i - \bm A^j) \|_{\mathrm{F}}^2 
\right)^{1/2}
\leq 8 \epsilon \sqrt{k_1 + k_2},
\end{equation}
where the left inequality follows from the column space packing definition, and the right inequality follows from the maximum $\ell_2$ separation of the individual column packings. 
Thus, the packing elements are well-separated in the prediction space. With a slight abuse of notation, let $\max \bm X$ denote the row-wise max-function on the matrix $\bm X$.
Next, let $P_{(\bm B,\bm A)}$ denote the distribution of the observations under parameters $(\bm B, \bm A)$, i.e. $P_{(\bm B,\bm A)} = \mathrm{Normal}\left(\max \bm \Xi \bm B - \max \bm \Xi A, \sigma_z^2\bm I_n \right)$.  
For two elements $(\bm B^i,\bm A^i)$ and $(\bm B^j,\bm A^j)$, the KL divergence is bounded by
\begin{align}
D_{\mathrm{KL}}\!\left( P_{(\bm B^i,\bm A^i)} ||  P_{(\bm B^j,\bm A^j)} \right)
\leq  \frac{1}{2\sigma_z^2}
\left\| 
   \left( \max_{j\in[k_1]} \bm \Xi \bm B^{i} - \max_{l\in[k_2]} \bm \Xi \bm A^i \right)
   - 
    \left( \max_{j\in[k_1]} \bm \Xi \bm B^j - \max_{l\in[k_2]} \bm \Xi \bm A^j \right)
\right\|_2^2. 
\end{align}
Using the fact that the row-wise max function is $1$-Lipschitz with respect to its inputs in $\ell_2$ norm, we obtain
\begin{equation}
D_{\mathrm{KL}}\!\left( P_{(\bm B^i,\bm A^i)} ||  P_{(\bm B^j,\bm A^j)} \right)\leq \frac{1}{2\sigma_z^2}
\left( \| \bm \Xi(\bm B^i - \bm B^j) \|_{\mathrm{F}}^2 + \| \bm \Xi(\bm A^i - \bm A^j) \|_{\mathrm{F}}^2 \right).
\end{equation}
From the packing construction, this yields the upper bound
\begin{equation}
D_{\mathrm{KL}}\!\left( P_{(\bm B^i,\bm A^i)} ||  P_{(\bm B^j,\bm A^j)} \right)\leq \frac{32 (k_1+k_2) \epsilon^2 n}{\sigma_z^2}.
\end{equation}
Finally, Fano's inequality requires that
\begin{equation}
\frac{1}{M^2} \sum_{i,j \in [M]} 
D_{\mathrm{KL}}\!\left( P_{(\bm B^i,\bm A^i)} ||  P_{(\bm B^j,\bm A^j)} \right) 
+ \log 2
\leq \frac{1}{2} \log M,
\end{equation}
which is satisfied provided
\begin{equation}
\epsilon^2 \leq C \frac{\sigma_z^2 (d+1)}{n}.
\end{equation}
Then Fano’s method \citep[Proposition 15.2]{wainwright2019high} yields \eqref{eq:main_mini_x}, which completes the proof.
\end{proof}

 Notice that since $\bm \Xi$ has i.i.d. Gaussian rows, by standard random matrix results, its singular values concentrate around $\sqrt{n}$. Hence, with high probability,
$
 \| \bm \Xi \bm \Delta \|_2 \approx \sqrt{n}$ for any $ \bm \Delta \in \mathbb R^{(d+1)\times k}$.
Thus, the prediction-norm lower bound converts into a parameter-norm lower bound of the same order as
\begin{equation}
\inf_{\hat{\bm B}, \hat{\bm A}} ~ \sup_{\bm B, \bm A} 
\mathbb{E} \left[   \left\|  (\hat{\bm B} - \bm B) \right\|_{\mathrm{F}}^2 
+   \left\|  (\hat{\bm A} - \bm A) \right\|_{\mathrm{F}}^2 \right] \geq C \frac{\sigma_z^2 (k_1+k_2)d}{n}.
\end{equation}

\subsection{Motivating the Smoothness of the Inverse Regression Curve with DoMA functions}\label{append:smoothness}
Recall that the inverse regression curve (IRC) is defined as $\mathbb E[\bm x| y]$. The goal to show that, although DoMA functions are inherently non-smooth, $m(y)$ can still be a smooth function of $y$. In what follows, we assume that $\bm x $ has a Lipschitz smooth sub-Gaussian distribution with bounded support $\mathcal{B}$ such that $\sup_{\bm x\in\mathcal{B}}\|\bm x\|_\infty \leq L$. For notational simplicity, we consider the max-affine model written as
\[
y  = \max_{j \in [k]}\bm \beta_j^\top [\bm x;1]+z,  
\]
and the following calculation easily extends to the DoMA case. The IRC can be written as
\begin{align}
\label{eq:smooth_decomposition}
     \mathbb{E}[\bm x|y] &= \sum_{j=1}^k \mathbb{P}(\bm x\in \mathcal{C}_j(\bm \beta)) \mathbb E[\bm x| y , ~x\in \mathcal{C}_j(\bm \beta)] + \mathbb P(\bm x\in \mathcal{V}(\bm \beta)) \mathbb E[\bm x| y ,~ x\in \mathcal{V}(\bm \beta)] \nonumber\\ 
    & = \sum_{j=1}^k \mathbb{P}(\bm x\in \mathcal{C}_j(\bm \beta)) \mathbb E[\bm x| y -[\bm \beta_j]_{d+1}= [\bm \beta_j]_{1:d}^\top \bm x +z,~ x\in \mathcal{C}_j(\bm \beta)] \,,
\end{align}
where the last equality follows from $\mathbb{P}(\bm x\in \mathcal{V}(\bm \beta))=0$ since $\mathcal{V}(\bm \beta)$ is a degenerate set and $\bm x$ has a Lipschitz smooth distribution, and the observation that 
\[
\|\mathbb E[\bm x| y , x\in \mathcal{V}(\bm \beta)]\|_\infty \leq \mathbb E[\|\bm x\|_\infty| y , x\in \mathcal{V}(\bm \beta)] \leq \sup_{\bm x\in \mathcal{B}}\|\bm x\|_\infty \leq L < \infty.
\]
We have now shown in \eqref{eq:smooth_decomposition} that the IRC of a max-affine model can be expressed as a convex combination of the IRCs of linear models. The IRC of linear models with sub-Gaussian distributions is known to satisfy the required smoothness properties to invoke the guarantees by the sliced inverse regression method \citep[Assumption 1]{tan2018convex}. The authors also specifically discuss the linear model in Example 1 of their numerical studies section. Overall, we expect max-affine models (and DoMA models by extension) to have smooth IRCs. This makes our DoMA model a good candidate for dimension reduction methods requiring such conditions such as sliced inverse regression and outer product of gradients (OPG) \cite{xia2002adaptive}. However, the guarantees of such methods only apply in the asymptotic of the sample size and require careful tuning of additional hyperparameters that crucially impact the methods' performance. For these reasons and other technical considerations, the spectral method by \citet{ghosh2021max} is adopted for initialization purposes in the main text.

\subsection{Proof of Lemma \ref{lemma:M1M2}}\label{append:proof:m1m2}
Let $\mathcal{K}=[k_1]\times[k_2]$. Then we can rewrite the regression model in \eqref{eq:def_model_xi} as
\begin{equation*}
    y = \sum_{(j,l)\in \mathcal{K}} \mathbf{1}_{\{\bm x \in \mathcal{C}_j(\bm \beta)\cap \mathcal{C}_l(\bm \alpha)\}} \langle \bm \beta_j -\bm \alpha_l, \bm \xi \rangle+z.
\end{equation*}
%In what follows we will drop the subscript $i$ for simplicity.
Then the expression of $\bm m_1$ is equivalently rewritten as
\begin{align*}
    \bm m_1 &= \mathbb E\left[y\bm x\right]\nonumber \\&= \sum_{(j,l)\in \mathcal{K}} \mathbb E\left[ \left(\mathbf{1}_{\{\bm x \in \mathcal{C}_j(\bm \beta)\cap \mathcal{C}_l(\bm \alpha)\}}\langle \bm \beta_j -\bm \alpha_l, \bm \xi\rangle+z\right)\bm x\right] = \sum_{(j,l)\in \mathcal{K}} \mathbb E\left[ \mathbf{1}_{\{\bm x \in \mathcal{C}_j(\bm \beta)\cap \mathcal{C}_l(\bm \alpha)\}}\langle \bm \beta_j -\bm \alpha_l, \bm \xi_i\rangle\bm x\right] \nonumber \\
    &= \sum_{(j,l)\in \mathcal{K}} \mathbb E\left[ \mathbf{1}_{\{\bm x \in \mathcal{C}_j(\bm \beta)\cap \mathcal{C}_l(\bm \alpha)\}}\right]\left[\bm \beta_j -\bm \alpha_l\right]_{1:d} = \sum_{(j,l)\in \mathcal{K}} \mathbb P\left( \bm x \in \mathcal{C}_j(\bm \beta)\cap \mathcal{C}_l(\bm \alpha)\right)\left[\bm \beta_j -\bm \alpha_l\right]_{1:d},
\end{align*}
 where the second inequality follows from the independence of $\bm x$ and $\bm z$, and the third equality follows from Stein's Lemma for Gaussian vectors, i.e. $\mathbb{E}\left[g(\bm x) \bm x\right] =\mathbb{E}\left[\nabla g(\bm x)\right]$.
Similarly, the expectation in $\bm M_2$ is calculated as
 \begin{align*}
     \bm M_2 &= \mathbb E\Bigg[\left(\sum_{(j,l)\in \mathcal{K}} \mathbf{1}_{\{\bm x_i \in \mathcal{C}_j(\bm \beta)\cap \mathcal{C}_l(\bm \alpha)\}}\langle \bm \beta_j -\bm \alpha_l, \bm \xi_i\rangle+z_i \right)\left(\bm x\bm x^\mathsf{T}-\bm I_d\right)\Bigg]\nonumber\\
     &= \sum_{p=1}^d\mathbb E\Bigg[\sum_{(j,l)\in \mathcal{K}} \mathbf{1}_{\{\bm x_i \in \mathcal{C}_j(\bm \beta)\cap \mathcal{C}_l(\bm \alpha)\}}\langle \bm \beta_j -\bm \alpha_l, \bm \xi_i\rangle\left(\bm x[\bm x]_p-\bm e_p\right)\Bigg]\bm e_p^\mathsf{T}\nonumber\\
     &= \sum_{p=1}^d\mathbb E\Bigg[\sum_{(j,l)\in \mathcal{K}} \mathbf{1}_{\{\bm x_i \in \mathcal{C}_j(\bm \beta)\cap \mathcal{C}_l(\bm \alpha)\}}\bigg([\bm \beta_j-\bm \alpha_l]^\mathsf{T}_{1:d}\bm x [\bm x]_p\bm x+[\bm \beta_j-\bm \alpha_l]_{d+1}[\bm x]_p\bm x\nonumber\\
     &\quad\hspace{1.3cm}-[\bm \beta_j-\bm \alpha_l]_{1:d}\bm x \bm e_p - [\bm \beta_j-\bm \alpha_l]_{d+1}\bm e_p\bigg)\Bigg]\bm e_p^\mathsf{T}\nonumber\\
     & = \sum_{p=1}^d \sum_{(j,l)\in \mathcal{K}}\Bigg[ \bigg([\bm \beta_j-\bm \alpha_l]_{1:d}\bm e_p^\mathsf{T} + \bm e_p [\bm \beta_j-\bm \alpha_l]_{1:d}^\mathsf{T}\bigg)\mathbb{E}\bigg[\mathbf{1}_{\{\bm x_i \in \mathcal{C}_j(\bm \beta)\cap \mathcal{C}_l(\bm \alpha)\}}\bm x\bigg] \nonumber \\
     &\quad\hspace{2cm} +[\bm \beta_j-\bm \alpha_l]_{d+1}\bm e_p \mathbb{P}\left(\bm x_i \in \mathcal{C}_j(\bm \beta)\cap \mathcal{C}_l(\bm \alpha)\right) - \bm e_p [\bm \beta_j-\bm \alpha_l]_{1:d}^\mathsf{T}\mathbb{E}\left[\mathbf{1}_{\{\bm x_i \in \mathcal{C}_j(\bm \beta)\cap \mathcal{C}_l(\bm \alpha)\}}\bm x\right] \nonumber\\ &\quad\hspace{2cm}-[\bm \beta_j-\bm \alpha_l]_{d+1}\bm e_p \mathbb{P}\left(\bm x_i \in \mathcal{C}_j(\bm \beta)\cap \mathcal{C}_l(\bm \alpha)\right)\Bigg]\bm e_p^\mathsf{T} \nonumber \\
     &= \sum_{p=1}^d \sum_{(j,l)\in \mathcal{K}} [\bm \beta_j-\bm \alpha_l]_{1:d} \bm e_p^\top \mathbb{E}\left[\mathbf{1}_{\{\bm x_i \in \mathcal{C}_j(\bm \beta)\cap \mathcal{C}_l(\bm \alpha)\}}\bm x\right] \bm e_p^\top =  \sum_{(j,l)\in \mathcal{K}} [\bm \beta_j-\bm \alpha_l]_{1:d}  \mathbb{E}\left[\mathbf{1}_{\{\bm x_i \in \mathcal{C}_j(\bm \beta)\cap \mathcal{C}_l(\bm \alpha)\}}\bm x\right]^\top,
 \end{align*} 
where the second equality follows from the independence of $\bm x$ and $\bm z$, and the third equality follows from Stein's Lemma.
\subsection{Proof of Lemma \ref{lemm:Span}}\label{append:proof:span}
We can assume without loss of generality that $\bm v = \bm\alpha_1$. Let $\oplus$ denote the Minkowski set addition. Then we have that
\begin{align}
         \label{eq:span_proof2}
         \mathrm{span}\left(\{\bm \beta_j\}_{j=1}^{k_1}\ominus\{\bm \alpha_l\}_{l=1}^{k_2}\right) &=\mathrm{span}\left(\{\bm \beta_j-\bm v\}_{j=1}^{k_1}\ominus\{\bm \alpha_l-\bm v\}_{l=1}^{k_2}\right) \nonumber \\
         &= \mathrm{span}\left(\{\bm \beta_j-\bm v\}_{j=1}^{k_1}\ominus\left(\bm 0\cup\{\bm \alpha_l-\bm v\}_{l=2}^{k_2}\right)\right) \nonumber \\
          &= \mathrm{span}\left(\{\bm \beta_j-\bm v\}_{j=1}^{k_1}\right) \oplus  \mathrm{span}\left(\{\bm \beta_j-\bm v\}_{j=1}^{k_1}\ominus\{\bm \alpha_l-\bm v\}_{l=2}^{k_2}\right)\nonumber\\
          & = \mathrm{span}\left(\{\bm \beta_j-\bm v\}_{j=1}^{k_1}\right)\oplus\mathrm{span}\left(\{\bm \alpha_l-\bm v\}_{l=2}^{k_2}\right),
\end{align}
which yields the assertion in \eqref{eq:span_statement}. The number of vectors used in the last line of \eqref{eq:span_proof2} yields the bound in \eqref{eq:rank_statement}. 
%%%%%%%%%%%%%%%%%%%%%%%%%%%%%%%%%%%%%%%%%%%%%%%%%%%%%%%%%%%%

%%%%%%%%%%%%%%%%%%%%%%%%%%%%%%%%%%%%%%%%%%%%

\end{document}